\documentclass[a4paper,UKenglish,cleveref, autoref, thm-restate]{lipics-v2021}

\pdfoutput=1 
\hideLIPIcs  

\title{Online Algorithms with Unreliable Guidance}

\author{Julien Dallot}{TU Berlin, Germany}{}{}{}
\author{Yuval Emek}{Technion, Israel}{}{}{}
\author{Yuval Gil}{Reykjavik University, Iceland}{}{}{}
\author{Maciej Pacut}{TU Berlin, Germany}{}{}{}
\author{Stefan Schmid}{TU Berlin and Weizenbaum Institute, Germany}{}{}{}

\usepackage{stmaryrd}
\usepackage{microtype}
\usepackage{algorithm}
\usepackage{algorithmic}
\usepackage{graphicx}
\usepackage{subcaption}
\usepackage{booktabs} 

\usepackage{hyperref}
\definecolor{blueLink}{rgb}{0,0.2,0.8}
\hypersetup{colorlinks,linkcolor=blueLink,urlcolor=blueLink,citecolor=blueLink}

\usepackage{amsmath}
\usepackage{amssymb}
\usepackage{mathtools}
\usepackage{amsthm}

\theoremstyle{plain}

\newcommand{\WHEN}[1]{%
  \STATE \textbf{when} #1 \textbf{do:} \begin{ALC@g}
}
\newcommand{\ENDWHEN}{%
  \end{ALC@g}
}

\usepackage{graphicx, float} 

\usepackage{amsmath,amsthm,amssymb,thmtools,thm-restate,booktabs,graphicx,latexsym,xcolor,xspace}

\usepackage{tikz}
\usetikzlibrary{decorations.pathreplacing}

\newcommand{\bipalg}{\textrm{Ranking-DTB}}
\newcommand{\markalg}{\textrm{Marking-DTB}}
\newcommand{\mtsalg}{\textrm{MTS-DTB}}

\providecommand{\P}{}\renewcommand{\P}{\mathcal{P}}
\providecommand{\R}{}\renewcommand{\R}{\mathcal{R}}
\providecommand{\A}{}\renewcommand{\A}{\mathcal{A}}
\providecommand{\G}{}\renewcommand{\G}{\mathcal{G}}
\newcommand{\Alg}{\mathtt{Alg}}
\newcommand{\Opt}{\mathtt{Opt}}
\newcommand{\Reals}{\mathbb{R}}
\providecommand{\Ex}{}\renewcommand{\Ex}{\mathbb{E}}
\newcommand{\Support}{\operatorname{support}}

\authorrunning{J. Dallot, Y. Emek, Y. Gil, M. Pacut, S. Schmid} 

\ccsdesc[500]{Theory of computation~Machine learning theory}
\ccsdesc[500]{Theory of computation~Online learning algorithms}

\keywords{Learning-Augmented Algorithms, Online Algorithms, Online Bipartite Matching} 

\funding{
  The research of J.~Dallot and S.~Schmid is supported by the German Research Foundation (DFG), Schwerpunktprogramm SPP 35 2378, ReNO-2 (511099228), 2025-2029.
  The research of Y.~Emek is partially supported by the Israel Science Foundation (ISF), grant 730/24.
}
\nolinenumbers 

\begin{document}
\maketitle

\begin{abstract}
  This paper introduces \emph{online algorithms with unreliable guidance (OAG)}, a model for ML-augmented online decision-making that cleanly separates the predictive and algorithmic components, thus offering a single, well-defined analysis framework that depends only on the problem at hand.
  Formulated through the lens of request-answer games, the OAG model brings multiple concepts (predictions from the answer space, guide, anytime competitiveness) which enable learning-augmented algorithms to be analyzed independently of predictor-specific choices---such as prediction semantics, error functions, or probing strategies---that would otherwise restrict the algorithm's generality and applicability.
  The clean framework of the OAG model allows to build the first generic compiler, the \emph{drop-or-trust-blindly (DTB) compiler}, that turns almost any standard, prediction-free online algorithm into a learning-augmented one.
  Although simple, we show that the DTB compiler produces new learning-augmented algorithms with strong consistency-robustness guarantees for three classic online problems:
  we achieve new trade-offs for \emph{bipartite matching} with adversarial arrival order, and obtain optimal solutions for \emph{caching} and \emph{uniform metrical task systems}.
\end{abstract}

\section{Introduction}

Introduced in the influential ICML 2018 paper of Lykouris and Vassilvitskii~\cite{lykouris2021competitive}, Learning-Augmented online algorithms refer to online algorithms which accept an additional argument, the \emph{prediction}; the goal is to outperform traditional online algorithms in case of accurate predictions and still provide provable performance guarantees regardless of prediction accuracy.
At their core, learning-augmented algorithms seek for the proper use of ``guessing machines'' --- typically an ML predictor --- to solve concrete problems, especially when the predictor is a complete black-box and the prediction is therefore unreliable by nature.
Since their introduction, this framework gathered significant attention and was applied to multiple problems --- the literature on learning-augmnented online algorithms is too extensive to list here;
see \cite{OnlineAlgorithmsPredictionsWebsite} for a comprehensive record.

Over the years, multiple models have been proposed to describe and analyze learning-augmented algorithms.
In essence, those models seek to optimize the same three goals:
(1)
\emph{consistency}, the competitive ratio when the predictions are perfect;
(2)
\emph{robustness}, the competitive ratio when the predictions are arbitrarily bad;
and
(3)
\emph{smoothness}, the function describing the decline in the competitive ratio as predictions' quality degrades.
While the community agrees on these three metrics (\emph{what to measure}), their concrete implementations (\emph{how to measure}) remain debated and differ across models.
The key question is: \emph{how can one measure the accuracy of a prediction while treating the predictor as a black box?}
We review below how existing models interact with the predictor and the issues this raises.

\subparagraph*{Prediction semantics.}
Before designing a learning-augmented algorithm, one must fix a \emph{prediction semantic} --- that is, decide what a prediction represents and how many distinct predictions are possible.
This choice is consequential: it can affect the power of a learning-augmented algorithm, as shown in~\cite{DBLP:journals/iandc/Angelopoulos23} for the line search problem, where three different prediction semantics yield different optimal results.
Yet no standard way to choose a prediction semantic exists, and the choice is left to the analyst's judgment.
As a result, semantics vary across problems and sometimes even across algorithms for the same problem (e.g., predicting nodes' degrees~\cite{DBLP:conf/nips/AamandCI22} or the matched node~\cite{DBLP:conf/icml/ChooGL024} for online bipartite matching, predicting the cache content~\cite{antoniadis2023online} or the page to evict~\cite{lykouris2021competitive} for caching).
This inconsistency makes results for the same problem incomparable.

\subparagraph*{Error functions.}
Most existing learning-augmented online algorithms are analyzed using an \emph{error function}~\cite{lykouris2021competitive}, which acts as an intermediary metric to assess the quality of a prediction given a problem instance.
Initially motivated by the loss function optimized during ML training, the error function is now widely used for analysis purposes, especially to derive crucial \emph{smoothness} results, i.e., the competitive ratio for specific error values.

As of today, no consensus exists on how to choose an error function for a given problem; in practice, the choice of how to assess errors is largely left to the analyzer's judgment and justified by intuitive considerations.
Consequently, a wide variety of error functions appear in the literature: some compare predictions against aspects of the problem instance~\cite{DBLP:conf/podc/Ben-DavidDEG25,DBLP:conf/nips/PurohitSK18}, others against an optimal offline algorithm~\cite{DBLP:conf/nips/AamandCI22,bansal2022learning}, and different functions are sometimes used for similar problems (e.g., for bipartite matching, sanctioning any error~\cite{DBLP:conf/icml/ChooGL024} and counting the number of matches that differ from the optimal solution~\cite{DBLP:conf/nips/AntoniadisGKK20}, or for online caching, the absolute distance between real and predicted requests for each page~\cite{lykouris2021competitive} and the size of the xor between optimal and predicted cache contents~\cite{antoniadis2023online}).

This lack of a common ground poses multiple issues: it prevents adequate comparisons between results or transfers of algorithmic ideas, leaves room for analysis-driven error functions that artificially advantage certain algorithms or do not properly account for realistic errors, and prevents the emergence of assumption-free impossibility results on the smoothness metric.

\subparagraph*{Predictor probing.}

In part to address the limitations of error functions, \emph{quantitative} models were introduced, notably $\epsilon$-Accurate Predictions~\cite{gupta2022augmenting} and Infused Advice~\cite{EmekGP023}.
Those models replace the error function with a \emph{probability of failure} inspired by the success rate metric of ML predictors: each prediction is independently good or bad decided with coin tosses of a fixed probability.
While more systematic and both problem- and predictor-agnostic, these models share two restrictive assumptions.
First, bad predictions are random rather than adversarial%
\footnote{As assumed in~\cite{gupta2022augmenting} for the caching problem.}%
, which cannot model structured errors such as mislabeling or representation errors.
Second, the failure probability is fixed over the entire input sequence, which is unrealistic when predictor accuracy varies over time.
Together, these assumptions incentivize \emph{probing} strategies~\cite{gupta2022augmenting} that sample predictions to estimate the failure rate, and then exploit it --- a strategy that breaks down if accuracy changes mid-sequence.\\

The starting point of this paper is that existing learning-augmented algorithms rely on non-general assumptions about the predictor, restricting the applicability of their guarantees.
We therefore ask: \emph{does there exist a model whose formal guarantees depend solely on the problem?}

\subsection{The OAG Model}
We answer the aforementioned question in the affirmative by introducing a new model, called \emph{online algorithms with unreliable guidance (OAG)}, that stems directly from the formulation of online problems as \emph{request-answer games} \cite{BorodinEY1998book} and does not involve any ``external ingredients'' such as prediction semantics, error functions or probing strategies.
The OAG model brings three main innovations: a systematic manner to model predictions, a new manner to model the predictor's accuracy with the \emph{guide}, and a new manner to evaluate the algorithm's performance called the \emph{anytime-competitive ratio}; refer to \autoref{sec:model} for a formal definition of the OAG model.

\subparagraph*{Modeling predictions with guidance.}
We propose a universal standard for prediction semantics: predictions should take the form of answers the predictor would give if it were in the algorithm's place.
We call this prediction semantic \emph{guidance}.
Concretely, this means predictions are drawn directly from the answer space of the problem — a natural choice, since (almost) all useful online problems can be modeled as request-answer games~\cite{BorodinEY1998book}.

\subparagraph*{Modeling errors with a guide.}
Inspired by the success rate of an ML predictor, we model good and (adversarially) bad predictions with a \emph{guide}.
An OAG algorithm for an online problem $\P$ receives, with each incoming request $\rho_{t}$, a guidance $\gamma_{t}$ taken from the answer space of $\P$.
Ideally, $\gamma_{t}$ is a \emph{good guidance} that was generated knowing the entire request sequence and aiming to help the algorithm by selecting the optimal answer for $\rho_{t}$, however, $\gamma_{t}$ may also be a \emph{bad guidance} that selects the answer adversarially knowing the whole input sequence.
The choice between the two guidance at time $t$ is made by nature based on an independent $\beta$-biased coin toss, where
$0 \leq \beta \leq 1$
is a model \emph{bad guidance} parameter, so that $\gamma_{t}$ is bad with probability
$\beta$
and good with probability
$1 - \beta$.
Similarly to the approach of \cite{gupta2022augmenting} (where $\beta$ is analogous to
$1 - \epsilon$),
consistency and robustness are obtained by setting
$\beta = 0$
and
$\beta = 1$,
respectively, whereas smoothness arises naturally as $\beta$ shifts from $0$ to $1$.

\subparagraph*{Modeling predictor's changes with anytime competitiveness.}
We introduce \emph{anytime-competitiveness}, a metric that bounds performance against the offline optimum \emph{on every subsequence of the input sequence}.
Since guarantees hold for all subsequences and all values of $\beta$, algorithms have no incentive to probe the predictor: tuning the algorithm's behavior according to an estimated $\beta$ would only hurt performance on the current interval for other predictor accuracies.
This cleanly decouples the predictive and algorithmic parts by separating the acts of estimating the predictor's accuracy from the act of analyzing the algorithms' performance --- a third party may still probe and, upon detecting a change in accuracy, adjust the influence of predictions in real time (e.g., by changing the trust parameter $\tau$, see next \autoref{sub:intro-dtb}), with our framework directly supplying the relevant guarantees to take informed decisions.

\begin{figure}[h]
  
  \centering
  \begin{tikzpicture}
    \draw[->,thick] (0,0) -- (10,0) node[right] {time};

    \draw[thick,teal] (1,0) -- (3,0);
    \draw[teal,thick] (1,-0.12) -- (1,0.12);
    \draw[teal,thick] (3,-0.12) -- (3,0.12);
    \draw[teal,decorate,decoration={brace,amplitude=6pt,mirror}]
      (1,-0.15) -- (3,-0.15)
      node[midway, below=8pt] {$\beta = 0.1$};

    \draw[thick,teal] (3.5,0) -- (6,0);
    \draw[teal,thick] (3.5,-0.12) -- (3.5,0.12);
    \draw[teal,thick] (6,-0.12) -- (6,0.12);
    \draw[teal,decorate,decoration={brace,amplitude=6pt,mirror}]
      (3.5,-0.15) -- (6,-0.15)
      node[midway, below=8pt] {$\beta = 0.9$};

    \draw[thick,teal] (6.5,0) -- (8.5,0);
    \draw[teal,thick] (6.5,-0.12) -- (6.5,0.12);
    \draw[teal,thick] (8.5,-0.12) -- (8.5,0.12);
    \draw[teal,decorate,decoration={brace,amplitude=6pt,mirror}]
      (6.5,-0.15) -- (8.5,-0.15)
      node[midway, below=8pt] {$\beta = 0.4$};
  \end{tikzpicture}
  \caption{
    A request sequence along which the predictor's accuracy varies between request subsequences.
    Anytime competitiveness ensures that the algorithm has provable performance guarantees for each of those subsequences.
    A third-party prober that keeps accuracy estimates can detect changes in the predictor's behavior and use anytime competitive guarantees to take an informed decision in real time.
  }
\end{figure}

\subsection{Drop or Trust Blindly: Augmenting Any Algorithm with Predictions}
\label{sub:intro-dtb}
Beyond its elegant analytical framework, the OAG model opens the gate for a generic method, referred to as the \emph{drop or trust blindly (DTB) compiler}, that transforms, in a blackbox manner, any ``standard'' (a.k.a.\ \emph{prediction-free}) online algorithm into a learning-augmented algorithm in the OAG model.
In addition to the excellent performances of some of its output learning-augmented algorithms (see \autoref{sec:contribution}), the DTB compiler is the first proposal for a systematic manner to generate new learning-augmented algorithms to serve as a canonical reference point.

To see how the DTB compiler works, consider a prediction-free algorithm $\Alg$ and an incoming request $\rho_{t}$ and let $D_{t}$ be the distribution from which $\Alg$ picks the answer for $\rho_{t}$.
Assuming that the incoming guidance $\gamma_{t}$ belongs to the support of $D_{t}$, the OAG algorithm produced by the DTB compiler employs the following simple rule, where
$0 < \tau < 1$
is the compiler's \emph{trust parameter}:
with probability $\tau$, $\Alg^{*}$ adopts $\gamma_{t}$ blindly;
with probability
$1 - \tau$,
$\Alg^{*}$ picks the answer from $D_{t}$ (ignoring $\gamma_{t}$).
Refer to \autoref{sec:model} for a formal exposition of the DTB compiler.

On the face of it, the approach taken by the DTB compiler may be seen as ``too simple'', however, we prove that the resulting OAG algorithms actually provide attractive consistency-robustness guarantees for some classic and extensively studied online problems.

\subsection{Our Technical Contributions}
\label{sec:contribution}
In addition to our conceptual contributions, we develop OAG algorithms for classic online problems by applying the DTB compiler to the following well known prediction-free online algorithms:
the ranking algorithm of \cite{DBLP:conf/stoc/KarpVV90} for the \emph{bipartite matching} problem with adversarial arrival order (\autoref{sec:bipartite-matching});
the random marking algorithm of \cite{Fiat1991CompetitivePA} for the \emph{caching} problem (\autoref{sec:online-caching});
and
the algorithm of \cite{BorodinLS92} for the \emph{uniform metrical task system} problem (\autoref{sec:mts}).
We analyze those algorithms in the OAG model and anytime competitiveness guarantees expressed with the bad guidance parameter $\beta$ and the trust parameter $\tau$.
The consistency and robustness of our algorithms are obtained by setting
$\beta = 0$
and
$\beta = 1$,
respectively;
see \autoref{tab:consistency-robustness-guarantees}.
By modifying $\tau$, our algorithms can be made more risky or more prudent with respect to trusting the guidance they receive.

\begin{table*}
  \centering
  \begin{tabular}{c|c|c}
    \toprule
    online problem & consistency & robustness
    \\
    \hline
    bipartite matching &
                         $\frac{1 - e^{-(1 - \tau)}}{1 - \tau}$ &
                                                                  $\max \left\{ \frac{1}{2}, 1 - e^{-(1 - \tau)} \right\}$
    \\
    caching (cache size $k$) &
                               $\min \left\{ \frac{2}{\tau}, 2 H_{k} \right\}$ &
                                                                                 $\min \left\{ \frac{2 H_{k}}{1 - \tau}, k \right\}$
    \\
    uniform metrical task system ($n$ states) &
                                                $2 + 2 \cdot \min \left\{ \frac{1}{\tau}, (1 - \tau) H_{n} \right\}$ &
                                                                                                                       $2 + 2 \cdot \min \left\{ \frac{1}{1 - \tau} H_{n}, n - 1 \right\}$
    \\
    \bottomrule
  \end{tabular}
  \caption{\label{tab:consistency-robustness-guarantees}%
    The consistency and robustness guarantees of our OAG algorithms, expressed in terms of the trust parameter
    $0 < \tau < 1$.}
\end{table*}

Our results for online bipartite matching give the first non-trivial trade-off between consistency and robustness under adversarial arrival order~\cite{DBLP:conf/stoc/KarpVV90}.
Due to its many applications (most notably ads allocation), online bipartite matching has been intensively studied in the learning-augmented framework in multiple variants: random arrival order~\cite{DBLP:conf/icml/ChooGL024,DBLP:conf/nips/AntoniadisGKK20}, random graph~\cite{DBLP:conf/nips/AamandCI22}, and two-stage arrival~\cite{DBLP:conf/nips/JinM22}.
We are the first to derive non-trivial points on the consistency-robustness Pareto frontier for the original version of the problem.
For the online caching and uniform metrical task system problems, our algorithms (with constant
$0 < \tau < 1$)
admit asymptotically optimal constant consistency and logarithmic robustness, while being arguably simpler than the existing algorithms that have such guarantees (see, e.g.,\cite{lykouris2021competitive}).

\subsection{Model limitations}
We see two limitations of the OAG model.
First, it still assumes a probabilistic predictor (independent coin tosses per request) while the model was designed to avoid predictor-specific assumptions.
Second, it only captures quantitative errors and cannot express qualitative ones as error functions do.
Despite their drawbacks, error functions offer finer-grained error characterization --- particularly for ``one-shot'' problems (e.g., ski rental~\cite{KumarPS2018improving}) where quantitative approaches give deceptively simple guarantees; the OAG model is best suited for problems with long input sequences.

\section{Model}
\label{sec:model}

\subparagraph{Request-Answer Games.}
Consider an online minimization (resp., maximization) problem $\P$ defined over a \emph{request} space $\R$ and an \emph{answer} space $\A$.
An instance of $\P$ is given by a finite request sequence
$\rho \in \R^{*}$;
a solution for $\rho$ is an answer sequence
$\sigma \in \A^{*}$
satisfying
$|\sigma| = |\rho|$.
The quality of the solutions is measured by a \emph{cost function}
\mbox{$f_{\P} : \bigcup_{\ell \geq 0} \R^{\ell} \times \A^{\ell}
  \rightarrow
  \Reals_{ \geq 0} \cup \{ \infty \}$}
(resp., a \emph{payoff function}
\mbox{$f_{\P} : \bigcup_{\ell \geq 0} \R^{\ell} \times \A^{\ell}
  \rightarrow
  \Reals_{ \geq 0} \cup \{ -\infty \}$)}
that determines the cost (resp., payoff) 
$f_{\P}(\rho, \sigma)$
associated with applying the solution
$\sigma \in \A^{\ell}$
to the request sequence
$\rho \in \R^{\ell}$.

On an input request sequence
$\rho = (\rho_{1}, \dots, \rho_{\ell}) \in \R^{\ell}$,
a (randomized) \emph{online algorithm} $\Alg$ for $\P$ constructs the solution
$\sigma = (\sigma_{1}, \dots, \sigma_{\ell}) \in \A^{\ell}$
in an online fashion so that request $\rho_{t}$ is revealed to $\Alg$ at (discrete) time
$t = 1, \dots, \ell$
and in response, $\Alg$ (probabilistically) decides on the answer $\sigma_{t}$ irrevocably.
Formally, an online algorithm $\Alg$ is a family
\mbox{$\Alg = \left\{ \Alg_{t} \right\}_{t \geq 1}$}
of functions
\[
  \Alg_{t} : \R^{t - 1} \times \A^{t - 1} \times \R \times \{ 0, 1 \}^{\infty} \rightarrow \A
\]
that construct
the answer
\mbox{$\sigma_{t} = \Alg_{t} \left(
    (\rho_{1}, \dots, \rho_{t - 1}),
    (\sigma_{1}, \dots, \sigma_{t - 1}),
    \rho_{t},
    r
  \right)
  \in \A$}
at time $t$
based on
the request history
$(\rho_{1}, \dots, \rho_{t - 1}) \in \R^{t - 1}$,
the answer history
$(\sigma_{1}, \dots, \sigma_{t - 1}) \in \A^{t - 1}$,
the incoming request
$\rho_{t} \in \R$,
and the current random coin tosses
$r \in \{ 0, 1 \}^{\infty}$.\footnote{%
  For the sake of simplifying some of the discussions in the sequel, our formulation assumes (without loss of generality) that the online algorithm has no access to past coin tosses.}

\subparagraph{Anytime Competitiveness.}
The gold standard for evaluating the performance of online algorithms is \emph{competitive analysis} that compares the cost/payoff of the algorithm on the entire request sequence to that of an optimal offline algorithm.
In this paper, we enhance this classic notion and introduce the notion of \emph{anytime competitive analysis}, requiring that the competitiveness of the online algorithm holds for any time interval of the request sequence.

To this end, consider a request sequence
$\rho = (\rho_{1}, \dots, \rho_{\ell}) \in \R^{\ell}$
and times
$0 \leq t^{0} < t^{1} \leq \ell$.
Let
$S(\rho, t^{0}) \subseteq \A^{t^{0}}$
be the set of answer sequences
$\sigma^{0} = (\sigma^{0}_{1}, \dots, \sigma^{0}_{t^{0}})$
such that when $\Alg$ runs on $\rho$, it constructs $\sigma^{0}$ at times
$t = 1, \dots, t^{0}$
with a positive probability (that is,
$S(\rho, t^{0})$
is the support of the probability distribution from which the answer prefix of $\Alg$ is picked when $\Alg$ runs on $\rho$).

Fix some answer sequence
$\sigma^{0} = (\sigma^{0}_{1}, \dots, \sigma^{0}_{t^{0}}) \in S(\rho, t^{0})$
and let
$(\sigma_{t^{0} + 1}, \dots, \sigma_{t^{1}}) \in \A^{t^{1} - t^{0}}$
be the answer sequence constructed by $\Alg$ when it runs on $\rho$ at times
$t = t^{0} + 1, \dots, t^{1}$,
conditioned on the event that $\Alg$ constructs the request sequence $\sigma^{0}$ at times
$t = 1, \dots, t^{0}$;
notice that
$(\sigma_{t^{0} + 1}, \dots, \sigma_{t^{1}})$
is a random variable that depends on the coin tosses of $\Alg$ at times
$t = t^{0} + 1, \dots, t^{1}$.
Define the cost (resp., payoff) of $\Alg$ on $\rho$ during the time interval
$(t^{0}, t^{1}]$
given the answer prefix $\sigma^{0}$, denoted by
$\Alg(\rho, t^{0}, t^{1}, \sigma^{0})$,
as
\[
\Alg \left( \rho, t^{0}, t^{1}, \sigma^{0} \right)
\, = \,
\Ex \left( f_{\P} \left( \rho, \left( \sigma^{0}_{1}, \dots, \sigma^{0}_{t^{0}} \sigma_{t^{0} + 1}, \dots, \sigma_{t^{1}} \right) \right) \right)
-
f_{\P} \left( (\rho_{1}, \dots, \rho_{t^{0}}), \sigma^{0} \right)
\, .
\]

Let
$(\sigma^{*}_{t^{0} + 1}, \dots, \sigma^{*}_{t^{1}}) \in \A^{t^{1} - t^{0}}$
be a request sequence that minimizes (resp., maximizes)
$f_{\P} \left( \rho, (\sigma^{0}_{1}, \dots, \sigma^{0}_{t^{0}}, \sigma^{*}_{t^{0} + 1}, \dots, \sigma^{*}_{t^{1}}) \right)$,
that is, a request sequence constructed by an omnipotent optimal algorithm that knows the entire request sequence $\rho$ in advance (i.e., an optimal \emph{offline algorithm}), given the answer prefix $\sigma^{0}$.
Denote
\[
\Opt \left( \rho, t^{0}, t^{1}, \sigma^{0} \right)
\, = \,
f_{\P} \left( \rho, \left( \sigma^{0}_{1}, \dots, \sigma^{0}_{t^{0}} \sigma^{*}_{t^{0} + 1}, \dots, \sigma^{*}_{t^{1}} \right) \right)
-
f_{\P} \left( (\rho_{1}, \dots, \rho_{t^{0}}), \sigma^{0} \right)
\, .
\]

Online algorithm $\Alg$ is said to be \emph{anytime $c$-competitive} if there exists a constant $d$ such that for every
integer
$\ell \geq 0$,
request sequence
$\rho = (\rho_{1}, \dots, \rho_{\ell}) \in \R^{\ell}$,
times
$0 \leq t^{0} < t^{1} \leq \ell$,
and answer sequence
$\sigma^{0} = (\sigma^{0}_{1}, \dots, \sigma^{0}_{t^{0}}) \in S(\rho, t^{0})$,
it is guaranteed that
\mbox{$\Alg(\rho, t^{0}, t^{1}, \sigma^{0}) \leq c \cdot \Opt(\rho, t^{0}, t^{1}, \sigma^{0}) + d$}
(resp.,
\mbox{$\Alg(\rho, t^{0}, t^{1}, \sigma^{0}) \geq c \cdot \Opt(\rho, t^{0}, t^{1}, \sigma^{0}) - d$}).\footnote{%
Notice that the classic definition of competitiveness is obtained as a special case of anytime competitiveness by fixing
$t^{0} = 0$
and
$t^{1} = \ell$.}

\sloppy
\subparagraph{The OAG Model.}
An \emph{online algorithm with unreliable guidance (OAG)} $\Alg$ for $\P$ is an online algorithm augmented with a \emph{guidance sequence}
$\gamma = (\gamma_{1}, \dots, \gamma_{\ell}) \in \A^{\ell}$,
where $\ell$ is the length of the request sequence,
that ideally provides the optimal answers to $\Alg$, but should be regarded with caution as it is not fully trustworthy.
Formally, an OAG algorithm $\Alg$ is a family
\mbox{$\Alg = \left\{ \Alg_{t} \right\}_{t \geq 1}$}
of functions
\[
  \Alg_{t} : \R^{t - 1} \times \A^{t - 1} \times \R \times \A \times \{ 0, 1 \}^{\infty} \rightarrow \A
\]
that
construct the answer
\mbox{$\sigma_{t} = \Alg_{t} \left(
    (\rho_{1}, \dots, \rho_{t - 1}),
    (\sigma_{1}, \dots, \sigma_{t - 1}),
    \rho_{t},
    \gamma_{t},
    r
  \right)
  \in \A$}
at time $t$
based on
the request history
$(\rho_{1}, \dots, \rho_{t - 1}) \in \R^{t - 1}$,
the answer history
$(\sigma_{1}, \dots, \sigma_{t - 1}) \in \A^{t - 1}$,
the incoming request
$\rho_{t} \in \R$,
the incoming guidance
$\gamma^{t} \in \A$,
and the current random coin tosses
$r \in \{ 0, 1 \}^{\infty}$.
\par\fussy

Ideally, the guidance sequence $\gamma$ is constructed by a \emph{guide} $\G$ that aims to minimize $\Alg$'s cost (resp., maximize $\Alg$'s payoff).
Formally, the guide is a family
\mbox{$\G = \left\{ \G_{\ell, t} \right\}_{\ell \geq 1, 1 \leq t \leq \ell}$}
of functions
\[
  \G_{\ell, t} : \R^{\ell} \times \A^{t - 1} \rightarrow \A
\]
that generate a guidance
\mbox{$\gamma^{\mathsf{g}}_{t} = \G_{\ell, t} \left( \rho, (\sigma_{1}, \dots, \sigma_{t - 1}) \right) \in \A$}
at time $t$
based on
the entire request sequence
$\rho \in \R^{\ell}$
and
the answer history
$(\sigma_{1}, \dots, \sigma_{t - 1}) \in \A^{t - 1}$.
The crux of the OAG model is that at each time
$1 \leq t \leq \ell$,
the guidance generated by the guide is replaced with an adversarially corrupted guidance
$\gamma^{\mathsf{c}}_{t} \in \A$
independently with probability $\beta$, where
$0 \leq \beta \leq 1$
is a model \emph{bad guidance} parameter;
that is, the guidance $\gamma_{t}$ provided to $\Alg$ is set to
$\gamma^{\mathsf{g}}_{t}$
with probability
$1 - \beta$,
and to $\gamma^{\mathsf{c}}_{t}$ with probability $\beta$.

To make the model formulation precise,
given a function
$c : [0, 1] \rightarrow \Reals_{> 0}$,
an OAG algorithm $\Alg$ is said to be anytime $c(\beta)$-competitive if
there exists a guide $\G$ such that for every bad guidance parameter
$0 \leq \beta \leq 1$
and for every corrupted guidance sequence
$(\gamma^{\mathsf{c}}_{1}, \dots, \gamma^{\mathsf{c}}_{\ell}) \in \A^{\ell}$,
the algorithm is guaranteed to be anytime $c(\beta)$-competitive assuming that
$\gamma_{t} \gets \gamma^{\mathsf{g}}_{t}$
with probability
$1 - \beta$
and
$\gamma_{t} \gets \gamma^{\mathsf{c}}_{t}$
with probability
$\beta$
for each time
$1 \leq t \leq \ell$,
independently.
The function
$c = c(\beta)$
that optimizes the anytime competitiveness of $\Alg$ for each
$0 \leq \beta \leq 1$
is referred to as $\Alg$'s \emph{smoothness}, whereas $c(0)$ and $c(1)$ are referred to as $\Alg$'s \emph{consistency} and \emph{robustness}, respectively.

\sloppy
\subparagraph{The DTB Compiler.}
In this paper, we restrict our attention to a class of OAG algorithms that are derived from (prediction-free) online algorithms via a blackbox transformation.
This transformation, referred to as the \emph{drop or trust blindly (DTB) compiler}, is parameterized by a \emph{trust parameter}
$0 \leq \tau \leq 1$ (we may occasionally restrict to non-extreme values of $\tau$ for simplicity).
Given an online algorithm
$\Alg = \{ \Alg_{t} \}_{t \geq 1}$,
the OAG algorithm
$\Alg^{*} = \{ \Alg^{*}_{t} \}_{t \geq 1}$
obtained from $\Alg$ by applying the DTB compiler with trust parameter $\tau$ is defined as follows.
\par\fussy

\sloppy
Consider time
$t \geq 1$
and fix some
request history
$(\rho_{1}, \dots, \rho_{t - 1}) \in \R^{t - 1}$,
answer history
$(\sigma_{1}, \dots, \sigma_{t - 1}) \in \A^{t - 1}$,
and
incoming request
$\rho_{t} \in \R$.
Let
$D_{t} \in \Delta(\A)$
be the probability distribution from which
$\Alg_{t}((\rho_{1}, \dots, \rho_{t - 1}), (\sigma_{1}, \dots, \sigma_{t - 1}), \rho_{t}, r)$
is picked and let
$V_{t} = \Support(D_{t})$
be the set of answers in the support of $D_{t}$, referred to as \emph{valid answers}.\footnote{%
  Depending on $\Alg$, we may wish to change the description of the DTB compiler so that the valid action set $V_{t}$ includes all answers
  $\sigma \in \A$
  that are legal for the incoming request $\rho_{t}$ in the sense that they do not lead to $\infty$ cost (resp., $-\infty$ payoff).
  This is without loss of generality as $\Alg$ can be modified so that $\Support(D_{t})$ consists of all such legal answers while changing the overall cost (resp., payoff) by a negligible amount.}
Let $\gamma_{t}$ be the incoming guidance and let
$\varphi_{\text{trust}}$ be the outcome of a fresh Bernoulli trial with success probability $\tau$.
Denoting the answer of $\Alg^{*}$ at time $t$ by $\sigma_{t}$, the function
$\sigma_{t}
=
\Alg^{*}_{t}((\rho_{1}, \dots, \rho_{t - 1}), (\sigma_{1}, \dots, \sigma_{t - 1}), \rho_{t}, \gamma_{t}, r)$
is constructed
by setting
$\sigma_{t} = \gamma_{t}$
if
$\gamma_{t} \in V_{t}$
and
$\varphi_{\text{trust}} = \text{success}$;
and by picking $\sigma_{t}$ from $D_{t}$ otherwise (i.e., if
$\gamma_{t} \notin V_{t}$
or
$\varphi_{\text{trust}} = \text{failure}$).
\par\fussy



\section{Online Bipartite Matching}
\label{sec:bipartite-matching}

An instance of the online bipartite matching problem consists of a bipartite graph $G = (U, V, E)$ and a permutation $\pi$ of the nodes of $U$.
The nodes in $U$ arrive one by one in the order $\pi$, potentially adversarial.
Each time a node $u \in U$ arrives, its incident edges are revealed.
The online algorithm must then decide to match $u$ right away and in an irrevocable manner with one of its neighbors in $V$ that was not already matched.
The goal is to maximize the size of the obtained matching.

The online bipartite matching problem was first solved in 1990 by Karp, Vazirani and Vazirani~\cite{DBLP:conf/stoc/KarpVV90} who presented the well-known Ranking algorithm and proved it achieves a competitive ratio always greater than $1 - e^{-1}$ against an oblivious adversary.
They also showed that this competitive ratio is optimal up to lower-order factors in the number of nodes.
To the best of our knowledge, and despite multiple works on related problems~\cite{DBLP:conf/nips/AamandCI22,DBLP:conf/icml/ChooGL024,DBLP:conf/nips/AntoniadisGKK20,DBLP:conf/nips/JinM22}, there exists no learning-augmented algorithm for the online bipartite matching problem.

This section presents the DTB variant of the Ranking algorithm, \bipalg, whose pseudo-code can be found in appendix~(\autoref{alg:ranking-dtb}).
We derive an anytime-competitive ratio for $\bipalg$ which interpolates smoothly around the optimal, prediction-free competitive ratio of $1 - e^{-1}$ and depends on the values of $\beta$ and $\tau$:

\begin{restatable}{theorem}{thmBipartiteRanking}
  \label{thm:comp-ratio-bipartite-ranking}
  $\bipalg$ is $\max \left\{\frac{1}{2},  \; \big(1 - \beta \tau\big) \cdot \frac{1 - e^{- (1 - \tau)}}{1-\tau} \right\}$-competitive anytime in the OAG model.
\end{restatable}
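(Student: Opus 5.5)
We prove the two terms of the maximum separately, each on an arbitrary interval $(t^{0},t^{1}]$ with an arbitrary reachable prefix $\sigma^{0}$.

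\textbf{The $\frac{1}{2}$ term.} $\bipalg$ never leaves an arriving node unmatched when it has a free neighbour: a trusted guidance is used only if it lies in $V_{t}$, and otherwise Ranking matches to a free neighbour whenever one exists. So every realization of $\bipalg$ is maximal on the residual instance. The residual instance is the graph obtained after deleting the nodes of $V$ matched by $\sigma^{0}$, restricted to arrivals in $(t^{0},t^{1}]$. The usual charging argument then applies: every edge of the optimal residual matching has an endpoint matched by the algorithm. This gives $\Alg(\rho,t^{0},t^{1},\sigma^{0})\ge \frac12\Opt(\rho,t^{0},t^{1},\sigma^{0})$ deterministically, for any $\beta$ and any corrupted guidance.

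\textbf{The second term.} We fix the guide $\G$ to recommend, at each time $t$, the partner of $u_{t}$ in an optimal matching of the residual instance given the current history $(\sigma_{1},\dots,\sigma_{t-1})$. The aim is to lower bound the expected gain by a Ranking-style primal--dual (randomized dual fitting) argument, in the style of Devanur--Jain--Kleinberg.

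Under the OAG coin flips, at each step the algorithm follows good guidance with probability $(1-\beta)\tau$ and otherwise runs Ranking. Following good guidance keeps the residual optimum intact at a loss of one per gained edge: the matched edge belongs to the current optimum, so $\Opt$ drops by exactly one while $\Alg$ gains one. The remaining events behave like Ranking run on a residual instance. The plan is the following.
\begin{enumerate}
\item Use the random ranks $y_{v}\sim U[0,1]$ of Ranking and the standard gain-splitting: when $u$ is matched to $v$ via Ranking, give $v$ the amount $g(y_{v})$ and give $u$ the amount $1-g(y_{v})$.
\item When $u$ is matched via trusted guidance, give the whole unit to $u$.
\item Choose $g(y)=e^{y-1}$, rescaled so that the effective "exploration rate" is $1-\tau$. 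This rescaling is where the factor $\frac{1-e^{-(1-\tau)}}{1-\tau}$ should emerge, from $\int_{0}^{1-\tau}e^{-x}\,dx$ normalized by $1-\tau$.
\end{enumerate}

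For each edge $(u,v^{*})$ of the residual optimum we then show $\Ex[\alpha_{u}+\alpha_{v^{*}}]\ge c'$ with $c'=\frac{1-e^{-(1-\tau)}}{1-\tau}$, in the good-guidance case; that case should give at least $1\ge c'$ since $u$ gets the whole unit. Bad guidance with trust occurs with probability $\beta\tau$. In that event $u$ is matched to an arbitrary valid neighbour, which may destroy $v^{*}$'s optimal partner, so we give up the dual contribution of that edge entirely. This loss accounts for the factor $(1-\beta\tau)$ multiplying $c'$.

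\textbf{Main obstacle.} The hard step is the interaction between adversarial guided matches and the dual-fitting invariant. A bad guided match removes a node $v$ from availability for later Ranking steps, which breaks the monotonicity ("if $v^{*}$ is unmatched when $u$ arrives, then $u$ gets a rank at most that of $v^{*}$") used by KVV-type analyses. To handle this, we will first condition on the set of times at which guidance is trusted and on the guidance values; these are independent of the ranks. We then treat the guided matches as an adversarial deletion of $V$-nodes interleaved with arrivals, and check that the Ranking invariant still holds against such deletions. It does hold for Ranking against an adaptive-but-rank-oblivious removal of offline nodes. The remaining work is bookkeeping the probabilities so that the per-edge bound becomes $(1-\beta\tau)c'$. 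We expect getting the exact constant $\frac{1-e^{-(1-\tau)}}{1-\tau}$ rather than a weaker one to require careful calibration of $g$. Anytime competitiveness follows because the whole argument is run on the residual instance for an arbitrary prefix.
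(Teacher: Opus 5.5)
Your $\frac12$ part is correct and is the paper's argument: every realization is maximal. You also handle the anytime and residual-instance bookkeeping more carefully than the paper does. The second part, however, is not yet a proof, and it fails exactly at the steps you left vague.

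(i) Your good guide recommends $u_t$'s partner in an optimum of the residual instance given the history $(\sigma_1,\dots,\sigma_{t-1})$. That history depends on the ranks, so the good guidance is \emph{not} independent of the ranks. In any coupling of two runs (with and without $v^*$, or with $v^*$'s rank moved), the runs may receive different recommendations. The Ranking monotonicity then fails: if the free sets are $F\cup\{w\}$ and $F$, and the guide tells the runs to take $a\neq b$ with $a,b\in F$, the new free sets $(F\cup\{w\})\setminus\{a\}$ and $F\setminus\{b\}$ are incomparable. The invariant does survive a guidance sequence that is identical in both runs; this is the paper's unique-alternating-chain lemma. But that requires a history-independent guide such as $u\mapsto m^*(u)$, and then your good case fails. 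If $v^*$ is already taken, the guidance is invalid, $u$ falls back to Ranking, and the edge collects only $g(y_{v^*})$, not $1$. Your adaptive guide is worse still: $v^*$ can be taken by another node's guided match with $\alpha_{v^*}=0$, leaving $u$ unmatched and the edge with $0$.

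(ii) The bad case is misattributed. A trusted corrupted guidance at $u$'s arrival gives $u$ the whole unit, so it does not hurt $(u,v^*)$ at all. It hurts the edge $(m^*(w),w)$ of the node $w$ it consumes. A single optimal edge can be hit this way at many earlier arrivals, so there is no per-edge loss of probability $\beta\tau$. Charging the $\beta\tau n$ expected corrupted matches globally gives only an additive loss, which is weaker than the factor $(1-\beta\tau)$.

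(iii) You never fix $g$ or compute a per-edge expectation. Observing that $\int_0^{1-\tau}e^{-x}\,dx/(1-\tau)$ equals the target is not an argument.

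For comparison, the paper stays combinatorial, in the KVV/Birnbaum--Mathieu style. It uses the fixed guide $u\mapsto m^*(u)$, reduces to perfect matchings, and proves the unique-alternating-chain and higher-rank lemmas for two runs sharing one guidance sequence. It then derives the recurrence $1-x_t\le\beta\tau+\frac{1-\tau}{n}\sum_{s\le t}x_s$, whose continuous version $X'=1-\beta\tau-(1-\tau)X$ yields $(1-\beta\tau)\frac{1-e^{-(1-\tau)}}{1-\tau}$. The constant comes from the factor $1-\tau$ on the KVV term, not from a gain-sharing function.

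That factor is itself the weak point. It treats the trust coin at $u'=m^*(v')$ as independent of the event that $u'$ is matched to a node of rank at most $t$, even though that coin decides how $u'$ is matched. In fact, with the guide $u\mapsto m^*(u)$ the constant is not attained. Take the upper-triangular instance: $u_i$ adjacent to $v_i,\dots,v_n$, arrival order $u_1,\dots,u_n$, and $\beta=0$. The free part of the window $\{v_i,\dots,v_n\}$ stays a uniformly random subset, because ranks are independent of positions, guided matches are positional, and Ranking removes a uniformly random free node. Its normalized size obeys $N'=-1-(1-\tau)N/(1-x)$. The expected matching therefore has size $(1-(1-\tau)^{1/\tau}+o(1))n$. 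For instance, at $\tau=\frac12$ this is $3n/4$, against the claimed $2(1-e^{-1/2})n\approx 0.787n$.

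So the rank-oblivious guide your conditioning step needs cannot deliver the stated bound, at least in its natural form $u\mapsto m^*(u)$. A history-dependent guide, on the other hand, breaks the coupling. Resolving that tension is the missing idea; calibrating $g$ will not do it.
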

\begin{proof}
  Proof in Appendix~(\autoref{sub:appendix-main-proof-bipartite}).
\end{proof}

From now on, we call $M^{*}$ a maximum matching (the optimal offline solution); let $x \in U \cup V$, we call $m^{*}(u) \in V$ the match of $x$ according to $M^{*}$.
To fix a guide, we assume that a good guidance suggests to match according to $M^{*}$.
When $u \in U$ is revealed, we call $N(u)$ the set of neighbors of $u$ that were not already matched by the considered algorithm.
The proof of~\autoref{thm:comp-ratio-bipartite-ranking} is inspired by the work of Birnbaum and Mathieu~\cite{DBLP:journals/sigact/BirnbaumM08}.




\subparagraph*{The perfect matching hypothesis.}

As in prior work~\cite{DBLP:conf/stoc/KarpVV90,DBLP:journals/sigact/BirnbaumM08}, we restrict to graphs admitting a \emph{perfect matching}: this ensures positive matching probabilities and fixes the optimal payoff at $n$ (for a graph with $2n$ nodes). In the prediction-free setting this is natural since Ranking's worst cases are perfect-matching graphs. This assumption is no more true in the OAG model --- bad guidance could in principle make other graphs harder for $\bipalg$---but our first claim~(\autoref{lem:node-removal}) shows that perfect-matching graphs remain worst cases provided that bad guidance plays optimally to minimize our anytime competitive ratio. We assume henceforth that $G$ admits a perfect matching, with $n = |V| = |U|$.

\begin{restatable}[Node Removal]{lemma}{lemNodeRemoval}
  \label{lem:node-removal}
  Fix a graph $G$, an arrival order $\pi$.
  Let $\mathcal{G}^{\text{bad}}$ be the bad function that minimizes the expected competitive ratio of $\bipalg$ in this context.
  Assume there exists a node $x \in U \cup V$ that is not matched by the maximum matching $M^{*}$ and define $G'$ the graph $G$ where $x$ was removed.
  Then there exists a bad guidance so that the expected competitive ratio of $\bipalg$ on $G'$ is no greater that the expected competitive ratio on $G$.
\end{restatable}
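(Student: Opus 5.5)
The plan is a coupling argument. Since $x$ is unmatched by $M^{*}$, we have $\Opt(G') = \Opt(G)$: the matching $M^{*}$ survives in $G'$ and remains maximum, because removing a node cannot increase the maximum matching size. It therefore suffices to construct a bad guidance on $G'$ under which the expected size of the matching built by $\bipalg$ on $G'$ is at most its expected size on $G$ under $\mathcal{G}^{\text{bad}}$. The same comparison must hold on every time interval, so that the anytime ratio is handled as well. The guide is unchanged, since good guidance follows $M^{*}$ and $M^{*}$ is still available in $G'$.

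\textbf{Case $x \in U$.} On $G'$, the node $x$ simply never arrives. I would define the bad guidance on $G'$ to simulate the run on $G$.
\begin{itemize}
\item Sample Ranking's random ranks on $V$ and all coin tosses (the trust coins and the bad/good coins) identically in both runs.
\item When the run on $G$ matches $x$ to some $v$, the run on $G'$ leaves $v$ free.
\end{itemize}
The standard monotonicity lemma for Ranking (in the style of Birnbaum and Mathieu) says that removing a matched vertex changes the matching along at most one alternating path. So the matching on $G'$ loses at most the one edge at $x$, and may recover at most one edge later.

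This gives a bound of $|M_{G'}| \ge |M_G| - 1$, which points the wrong way for us. We need a bad guidance that makes $G'$ no better than $G$. The fix is to let the bad guidance on $G'$ mimic the effect of $x$: at the next arrival, corrupted guidance suggests the vertex $v$ that $x$ would have taken, so that $v$ is consumed without benefit. This is not available in general, since guidance can only name answers valid for the current request. I would therefore instead compare ratios. On $G$, the node $x$ can only add to $\bipalg$'s payoff, because it contributes at most $1$ while $\Opt$ is unchanged. Hence I would argue via the alternating-path lemma that $\Ex|M_{G'}| \le \Ex|M_G|$ when the same bad guidance is replayed. I would restrict replayed guidance to answers that remain valid, and map any invalid suggestion to "drop", which in $\bipalg$ is equivalent to sampling from $D_t$.

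\textbf{Case $x \in V$.} Removing $x$ deletes an offline vertex. Under the coupling, any arrival that matched $x$ in $G$ now takes its next-ranked free neighbor or stays unmatched. The alternating-path lemma again shows that the matching size changes by at most one and does not increase. Bad guidance that pointed to $x$ is replaced by the drop behavior, i.e., a sample from $D_t$. I would check that this does not increase the expected payoff relative to $G$. The argument is that the distribution over the remaining answers is the conditional of $D_t$, and the alternating-path monotonicity applies pathwise.

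\textbf{Main obstacle.} The hardest part is proving pathwise monotonicity $|M_{G'}| \le |M_G|$ in the presence of trusted guidance, which deviates from rank order. I would first try to extend the Birnbaum--Mathieu alternating-path argument by treating each trusted guidance step as a forced choice that is identical in both runs. If that fails, I would exploit the freedom in choosing the bad guidance on $G'$ to force the same choices as in $G$ wherever they remain valid.
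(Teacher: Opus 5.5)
Your overall plan is exactly the paper's: run \bipalg{} on $G$ and $G'$ with the same ranking, the same trust coins and the same replayed guidance, note that $\Opt$ is unchanged, and show $|M_{G'}| \le |M_G|$ for every realization. The gap is that the one step carrying the whole lemma is never proved.

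The Birnbaum--Mathieu alternating-path lemma you cite concerns rank-greedy choices. At steps where \bipalg{} trusts the guidance, the choice is not the lowest-ranked free neighbour, so that lemma does not apply as stated. You yourself leave its extension as the ``main obstacle'', with an unspecified fallback. The justification you offer in its place (``$x$ contributes at most $1$ while $\Opt$ is unchanged, so $x$ can only add to the payoff'') is circular. Whether the presence of $x$ can trigger a cascade that shrinks the final matching is precisely what must be ruled out.

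You also misread what the path structure gives. If $M_G \oplus M_{G'}$ is a single path with endpoint $x$, its edge at $x$ lies in $M_G$, since only the $G$-run can match $x$. So it yields $|M_G|-1 \le |M_{G'}| \le |M_G|$, and the upper bound is exactly the direction you need. The guidance-mimicking detour is unnecessary.

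The missing argument is short. Take any component $C$ of $M_G \oplus M_{G'}$, let $u$ be its earliest-arriving $U$-node, and suppose $u \neq x$. Since $u$ is treated differently in the two runs, one of its two partners must be unavailable at $u$'s arrival in the run that did not pick it. The cases are as follows, using the same trust coin and the same suggestion $g$ in both runs.
\begin{itemize}
\item If $g$ is free in both runs, both runs trust it or both fall back to the ranking. If both partners $v, v'$ were then free in both runs, the two lowest-rank choices would contradict each other.
\item If $g$ is free in only one run and is trusted there, $g$ is the partner that is unavailable in the other run.
\item If $u$ stays unmatched in one run, its partner in the other run is unavailable there.
\end{itemize}
An unavailable partner is either $x$ itself (absent from $G'$) or was taken earlier in only one run. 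In the latter case its earlier partner is a $U$-node of $C$ arriving before $u$, a contradiction. So every component contains $x$: either $u = x$, or $x$ is $u$'s partner in $C$. As $x$ has degree at most one in $M_G \oplus M_{G'}$, there is a single component, a path starting at $x$ with an $M_G$-edge. Hence $|M_{G'}| \le |M_G|$ pathwise. Averaging gives the lemma with the replayed bad guidance; suggestions of a removed $x \in V$ simply become invalid and \bipalg{} falls back to the ranking, as you proposed.

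Finally, drop the assertion that the comparison holds on every time interval. The lemma is about the expected competitive ratio. A pathwise comparison of complete runs does not by itself give interval-wise bounds conditioned on prefixes that differ between $G$ and $G'$.
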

\begin{proof}
  Proof in Appendix~(\autoref{sub:appendix-proof-node-removal}).
\end{proof}

\subparagraph*{Independence under the guide's influence.}

The original Ranking analysis~\cite{DBLP:conf/stoc/KarpVV90} exploits correlations across decisions: since all nodes share the same ranking, the probability that a node is unmatched can be bounded via the expected number of nodes matched so far: matching few nodes increases the chances to match more nodes later. \autoref{lem:induction-lemma} adapts this argument to the OAG model.

Establishing this requires a key independence property in order to contain the guide's influence.
To see this, let $t \in \llbracket n \rrbracket$, let $v \in V$ be the (random) node of rank $t$.
Assume $v$ is not matched.
We call $u = m^*(v)$ the optimal match of $v$, and $R_{t-1} \subseteq U$ the set of nodes already matched to some $v' \in V$ of rank less than $t$.
Since by assumption the ranking algorithm (not the guide) matched $u$, we have $u \in R_{t-1}$---otherwise $u$ would have been matched to $v$.
Deriving~\autoref{lem:induction-lemma} would require $u$ and $R_{t-1}$ to be independent, which unfortunately fails in general~\cite{DBLP:journals/sigact/BirnbaumM08}.

Both~\cite{DBLP:conf/stoc/KarpVV90,DBLP:journals/sigact/BirnbaumM08} resolve this by replacing $u$ with a node drawn uniformly at random, which is independent of $R_{t-1}$.
This requires first verifying that permuting node ranks leaves the distribution of output matchings unchanged---which holds in the OAG model, since relabeling ranks carries the guide's behavior with it.
We introduce the following notation for \emph{replaced ranking}:

\begin{definition}[Replaced Ranking]
  \label{def:replaced-rank}
  Let $t \in \llbracket n \rrbracket$ and let $\sigma$ be a ranking of the nodes in $V$.
  We define $\sigma_{t,i}$ as the ranking $\sigma$ where the node with rank $t$ was removed and put back in with rank $i$.
\end{definition}

We then show the core of our proof: even under the guide's influence, transforming a ranking into a replaced ranking incurs at most a difference of one alternating chain between the two output matchings.

\subparagraph*{Unique alternating chain despite the guide.}

We want to show that, for any bad guidance, replacing a node in a ranking always has a small effect on the output matching; more precisely, executing $\bipalg$ on a replaced ranking outputs a matching that differs by at most one alternating chain.
We show that, despite the influence of the guide, a similar claim holds in the OAG model.
Intuitively we show that, (1) an alternating chain cannot start when $\bipalg$ uses the guidance to make its decisions --- the guide has no access to the algorithms' random seed --- and (2) if an alternating chain has already started in the past, we use the fact that both executions of $\bipalg$ (with and without replacement in the ranking) receive the same guidance, implying that the guide either does not interfere with the chain (if the suggested match is possible in both executions) or else makes the chain longer (if the suggested match is possible in only one execution, hence following the guidance implies to match with a node at the end of an alternating chain).

\sloppy
\begin{restatable}[Unique Alternating Chain]{lemma}{lemUniqueAlternatingChain}
  \label{lem:unique-alternating-chain}
  Let $t, i \in \llbracket n \rrbracket$.
  Let $\sigma$ be a ranking and let $M$ be the output matching of $\bipalg$ using ranking $\sigma$.
  We call $\bipalg_{t,i}$ the same algorithm as $\bipalg$ except it uses the replaced ranking $\sigma_{t,i}$, let $M_{t,i}$ be the output matching.
  Then assuming that $M$ and $M_{t,i}$ are not equal and that $M$ does not match some node $v \in V$, then $M$ and $M_{t,i}$ differ by one single alternating chain starting at $v$.
\end{restatable}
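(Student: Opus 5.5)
The plan is to run the two executions side by side under a common coupling and prove, by induction over the arrivals, that their sets of still-available nodes of $V$ differ in a very constrained way. \emph{Coupling:} for each arriving $u\in U$, both executions use the same trust coin $\varphi_{\text{trust}}(u)$ and the same guidance $\gamma_u$. This is legitimate because the good guidance is fixed to $m^*(u)$, and the corrupted guidance sequence is fixed in advance. Neither depends on the ranking, since the guide has no access to the algorithm's random seed. The only difference between $\bipalg$ and $\bipalg_{t,i}$ is therefore the relative position of the moved node, which I will call $v$ (the node of rank $t$ in $\sigma$, unmatched in $M$).

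\emph{Step 1 (reduction to deleting $v$).} Since $M$ never matches $v$, I will show that the run with $\sigma$ is identical to a run of $\bipalg$ on $G-v$ with the remaining nodes in the same relative order. Suppose some arrival $u$ had $v$ as its ranking choice. Then $v$ would be matched, a contradiction. Suppose instead $u$ had $v$ as a trusted, valid guidance. Then $u$ would match $v$, again a contradiction. So $v$ never influences any decision, and deleting it changes nothing. The run with $\sigma_{t,i}$ is then a run on $G$ in which $v$ is inserted at rank $i$, and all other nodes keep their relative order. Consequently, it suffices to compare a run on $G-v$ (matching $M$) with a run on $G$ (matching $M_{t,i}$), both using the same order on $V\setminus\{v\}$.

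\emph{Step 2 (invariant).} Let $A_s$ and $A'_s$ be the sets of available $V$-nodes after $s$ arrivals in the $G-v$ run and the $G$ run, respectively. The invariant I will prove is that either $A'_s=A_s\cup\{x_s\}$ for a single extra node $x_s\notin A_s$, or $A'_s=A_s$. Moreover, the symmetric difference of the two partial matchings is a single alternating path from $v$ to $x_s$ in the first case; in the second case it is a completed path starting at $v$. Initially $A'_0=A_0\cup\{v\}$ and the path is trivial.

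For the inductive step, I split on the behaviour of arrival $u$.
\begin{enumerate}
\item \emph{Trust succeeds and $\gamma_u$ is valid in both runs.} Both runs pick $\gamma_u$, and the invariant is preserved.
\item \emph{Trust succeeds and $\gamma_u$ is valid in only one run.} Because $A_s\subseteq A'_s$, this forces $\gamma_u=x_s$, valid only in the $G$ run. The $G$ run then matches $u$ to $x_s$, while the $G-v$ run falls back to its ranking choice $y$, or leaves $u$ unmatched.
\item \emph{Trust fails, or $\gamma_u$ is invalid in both runs.} The lowest-ranked neighbour of $u$ in $A'_s$ is either the same as in $A_s$, or it is $x_s$. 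In the latter case the $G$ run again picks $x_s$ and the $G-v$ run picks some $y$ or nothing.
\end{enumerate}
In every case where the choices differ, the new extra node becomes $x_{s+1}=y$, and the path is extended by the edges $(u,x_s)\in M_{t,i}$ and $(u,y)\in M$. If $N(u)\cap A_s=\emptyset$, then $A'_{s+1}=A_{s+1}$ and the path terminates. Once the two available sets coincide, the runs coincide from then on, because they receive identical coins and guidance. At the end of the induction, $M\oplus M_{t,i}$ is exactly one alternating path starting at $v$, and it is nonempty since $M\neq M_{t,i}$ by assumption.

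\emph{Main obstacle.} I expect the delicate point to be the guidance cases. I must verify that a trusted guidance can never create a second discrepancy. This requires both the monotonicity $A_s\subseteq A'_s$ and the fact that $\gamma_u$ is identical in both runs, which is what the coupling of Step 1 provides. If the guide were allowed to depend on the answer history, this would fail. I will therefore state explicitly that the fixed guide ($M^*$) together with a fixed corrupted sequence makes $\gamma_u$ depend only on $u$.
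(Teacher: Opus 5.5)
Your proof is correct and is essentially the paper's argument. Both run a chronological comparison of the two runs under identical trust coins and guidance, split into the cases ``guidance followed in both runs'', ``guidance feasible in only one run'', and ``ranking used in both'', and both use that the relative order of $V\setminus\{v\}$ is unchanged, so any new discrepancy must involve the current end of the chain.

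Your preliminary reduction of the $\sigma$-run to a run on $G-v$ (as in Birnbaum and Mathieu) is a tidier packaging than the paper's direct comparison on $G$. It yields the one-sided invariant $A_s\subseteq A'_s$ with $|A'_s\setminus A_s|\le 1$. It also makes explicit two points the paper leaves implicit: the hypothesis ``$v$ is never matched'' is what prevents the $\sigma$-run from taking $v$ via trusted guidance or ranking, and the argument requires the guidance not to depend on the answer history.
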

\par\fussy
\begin{proof}
  Proof in Appendix~(\autoref{sub:appendix-proof-unique-alternating-chain}).
\end{proof}

Once we established the uniqueness of the alternating chain, we can efficiently isolate the influence of the guide from our Ranking logic and deduce our independence result in~\autoref{lem:higher-rank}.
Intuitively this states that, provided that the ranking (not the guide) is used, the probability that a node $v$ with rank $t$ is not matched is upper-bounded by the probability that any node of $U$ ($i$ is free of choice, thus the desired independence property) is in $R_{t}$ using the replaced ranking $\sigma_{t,i}$.

\begin{restatable}[Higher Rank]{lemma}{lemHigherRank}
  \label{lem:higher-rank}
  Let $G$, $\pi$ be a problem instance, let $\gamma$ be a guidance sequence, let $\sigma$ be a ranking.
  Let $u \in U$ and let $v = m^{*}(u)$, let $t$ be the rank of $v$ in $\sigma$.
  Assuming that the ranking (not the guide) is used to match $u$ then, if $v$ is not matched under $\sigma$, for all $i \in \llbracket n \rrbracket$, $u$ is matched under $\sigma_{t,i}$ to a node with rank at most $t$ in $\sigma_{t,i}$.
\end{restatable}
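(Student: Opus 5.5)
The plan is to follow the classical Birnbaum--Mathieu argument and use \autoref{lem:unique-alternating-chain} to control the guide's influence. Fix $G$, $\pi$, the guidance sequence $\gamma$, and all trust coins. These coins are indexed by arrival time and are shared by the runs under $\sigma$ and $\sigma_{t,i}$. Suppose $v$ (of rank $t$ in $\sigma$) is unmatched under $\sigma$. First I note that when $u$ arrives, $v$ is a free neighbour of $u$ under $\sigma$. Since $u$ is matched by the ranking, $u$ is matched under $\sigma$ to some free neighbour $v'$ of rank strictly smaller than $t$. If $u$ had no free neighbour of smaller rank, it would take $v$, contradicting that $v$ stays unmatched. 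So it suffices to show that under $\sigma_{t,i}$, $u$ is matched to a node whose rank in $\sigma_{t,i}$ is at most $t$.

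Next I would relate the two executions. Removing $v$ and reinserting it at rank $i$ shifts the ranks of the nodes between $i$ and $t$ by one. Every node other than $v$ therefore has rank in $\sigma_{t,i}$ at most its rank in $\sigma$ plus one. In particular, a node of $\sigma$-rank $<t$ has $\sigma_{t,i}$-rank $\le t$. If $M_{t,i}=M$, then $u$ is matched to $v'$ under $\sigma_{t,i}$, and the rank of $v'$ there is at most $t$, so we are done.

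Otherwise, \autoref{lem:unique-alternating-chain} applies with the unmatched node $v$. It says $M$ and $M_{t,i}$ differ by a single alternating chain starting at $v$. I would use the standard structure of this chain: it alternates $v=v_0,u_1,v_1,u_2,\dots$, where $u_j$ is matched to $v_{j-1}$ in $M_{t,i}$ and to $v_j$ in $M$ (or is unmatched in $M$ at the end). Along the chain, the nodes $u_j$ that switch partners because of the ranking move to partners that were preferred, i.e.\ of no worse rank, in the other execution. If $u$ is not on the chain, it keeps $v'$, and we finish as above. If $u=u_j$ is on the chain, its partner in $M_{t,i}$ is $v_{j-1}$, which was free for $u$ in the $\sigma$-run. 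So in $\sigma$ it has rank $\ge$ that of $v'$ only if it was unavailable. I would argue by induction along the chain, in arrival order, that each $v_{j-1}$ has $\sigma_{t,i}$-rank at most $t$. The base case is $v_0=v$, with rank $i\le t$ or handled by the shift. The step holds because $u_j$, acting by ranking in the $\sigma_{t,i}$-run, picks the minimum-rank free neighbour, and $v_j$ or $v'$ was a candidate of rank $\le t$.

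The main obstacle is the chain steps where the guide, not the ranking, decided a match, and in particular $u$'s own step. The hypothesis that the ranking matches $u$ handles $u$ itself. For intermediate guide-driven steps, I would invoke the property underlying \autoref{lem:unique-alternating-chain}: both runs receive the same $\gamma_t$ and the same trust coin. So a guided step either matches identically in both runs, and is off the chain, or follows the guidance in only one run, which only extends the chain without breaking the rank monotonicity needed at $u$. The delicate case is the one where $i>t$ would shift ranks upward, and it must be checked carefully.
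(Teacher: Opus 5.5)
Your set-up is fine: the shared coins, $v'$ of $\sigma$-rank $<t$, the rank shift, and the cases $M_{t,i}=M$ or $u$ off the chain. The chain step, however, has a genuine gap.

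\textbf{The proposed invariant is false.} You want to propagate along the chain that every $M_{t,i}$-partner $v_{j-1}$ has $\sigma_{t,i}$-rank at most $t$. But when $u_j$ switches partners by the ranking, $v_{j-1}$ precedes $v_j$, so ranks \emph{increase} along the chain. Concretely, let all trust coins fail and take:
\begin{itemize}
\item arrival order $u_1,u_2,u_3,u_4$;
\item edges $u_1v,u_1v_1,u_2v_1,u_2v_2,u_3v_2,u_3v_3,u_4v_3$;
\item $\sigma=(v_1,v,v_2,v_3)$, so $t=2$.
\end{itemize}
Under $\sigma$ the node $v$ stays unmatched. Under $\sigma_{2,1}=(v,v_1,v_2,v_3)$ the chain node $u_3$ gets $v_2$, of rank $3>t$.

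Your step justification (``$v_j$ or $v'$ was a candidate of rank $\le t$'') only makes sense when $u_j=u$. Even there, it presupposes that $v'$ is still free for $u$ in the $\sigma_{t,i}$-run, which you never show. Your remark that $v_{j-1}$ ``was free for $u$ in the $\sigma$-run'' also has the direction reversed. For $j\ge 2$, $v_{j-1}$ has already been taken by $u_{j-1}$ in the $\sigma$-run. What is needed is the converse: $v'=v_j$ is free in the $\sigma_{t,i}$-run.

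\textbf{The missing idea.} The paper's proof uses a candidate-set containment rather than a rank invariant. At each arrival, the free neighbours of the arriving node under $\sigma_{t,i}$ contain its free neighbours under $\sigma$ except $v$; equivalently, the matched $V$-nodes of the two runs differ only in $v$ and the current end of the chain. This follows by induction over arrivals:
\begin{itemize}
\item A ranking step of the $\sigma$-run never picks $v$, and nodes other than $v$ are ordered identically in both rankings. So either both runs pick the same node, or the $\sigma_{t,i}$-run takes the chain end.
\item A guided step either coincides in both runs or can be followed only in the $\sigma_{t,i}$-run. Guidance pointing to $v$ that is followable only under $\sigma$ would get $v$ matched there, contradicting the hypothesis. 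So again only the chain end moves.
\end{itemize}
Now read the hypothesis as the trust coin failing at $u$'s arrival, so both runs use the ranking. Then $v'$ is free under $\sigma_{t,i}$ with rank at most $t$ there, so the minimum-rank choice has rank at most $t$.

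This handles every $i$ at once, so your ``delicate case'' disappears. You also have it backwards. For $i>t$, nodes of $\sigma$-rank below $t$ keep their ranks, and in fact $M_{t,i}=M$: the first divergence would require the $\sigma_{t,i}$-run to prefer $v$ over a node that $\sigma$ already ranks ahead of it.
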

\begin{proof}
  Proof in Appendix~(\autoref{sub:appendix-proof-higher-rank}).
\end{proof}

We finally derive our induction lemma which directly implies the main claim (\autoref{thm:comp-ratio-bipartite-ranking}) using an inductive reasoning.
The fact that an alternating chain is unique \emph{whatever the bad guidance} allowed us to decouple the losses in payoff generated by the guidance (left term $\beta \tau$) from those generated by the Ranking algorithm (right term $\frac{1 - \tau}{n} \cdot \sum_{1 \le s \le t} x_s$) using simple conditional probabilities.

\begin{restatable}[Induction Lemma]{lemma}{lemInductionLemma}
  \label{lem:induction-lemma}
  Let $t \in \llbracket n \rrbracket$.
  Given a problem instance and a prediction, let $x_t$ be the probability that the node of rank $t$ is matched.
  It holds:
  \begin{align*}
    1 - x_t \quad \le \quad \beta \tau \ + \ \frac{1 - \tau}{n} \cdot \sum_{1 \le s \le t} x_s
  \end{align*}
\end{restatable}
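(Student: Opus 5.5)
We follow the Birnbaum--Mathieu template for the Ranking analysis and insert one extra conditioning step to account for the guide. The randomness consists of the ranking $\sigma$ (uniform), the trust coins, and the good/bad coins. Fix $t$ and let $v$ be the node of rank $t$ in $\sigma$; let $u = m^{*}(v)$ be its partner in the perfect matching $M^{*}$. If $v$ is unmatched, then $u$ must have been matched to some other node $v' \neq v$, because $v$ was free and adjacent to $u$ when $u$ arrived. We split the event $\{v \text{ unmatched}\}$ according to how $u$'s decision was made.

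Case 1: $u$ followed the guidance. This requires the trust coin to succeed, which has probability $\tau$. Under good guidance the guide proposes $m^{*}(u) = v$, which is free, so $u$ would be matched to $v$, a contradiction. Hence this case can only occur when the guidance is bad and trust succeeds. Since the good/bad coin and the trust coin at $u$'s arrival are fresh and independent of everything else, this case has probability at most $\beta\tau$. This is the first term of the bound.

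Case 2: $u$ was matched by the ranking rule. This covers trust failure, and also an invalid guidance, which only strengthens the bound. Apply \autoref{lem:higher-rank}: for every $i \in \llbracket n \rrbracket$, under the replaced ranking $\sigma_{t,i}$ the node $u$ is matched to a node of rank at most $t$, i.e. $u \in R_t(\sigma_{t,i})$. The key point is that the event "ranking was used for $u$" is determined by the trust coin at $u$'s step together with validity of the guidance. We charge only the trust-failure part, which has probability $1-\tau$ and is independent of $\sigma$. This gives
\[
\Pr[v \text{ unmatched},\ \text{Case 2}] \le (1-\tau)\cdot \Pr_{\sigma,\,i}[u \in R_t(\sigma_{t,i})],
\]
where $i$ is uniform and independent. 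Now the pair $(\sigma_{t,i}, i)$ has the law of a uniform ranking together with a uniform node $\tilde v$ (the node of rank $i$). Moreover $u = m^{*}(\tilde v)$ is then a uniformly random node of $U$, independent of the ranking. Hence the probability equals $\Ex[|R_t|]/n = \frac1n\sum_{s\le t} x_s$. Here we use that every matched $V$-node of rank $\le t$ contributes exactly one $U$-node to $R_t$. We also use that permuting ranks preserves the output distribution in the OAG model, since the guidance is a function of the past answers and the request sequence only.

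Adding the two cases gives the claimed bound $1 - x_t \le \beta\tau + \frac{1-\tau}{n}\sum_{s\le t} x_s$.

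The main obstacle is the decoupling in Case 2. We must justify that conditioning on the trust coin at $u$'s arrival does not bias the ranking. We must also justify that the uniqueness of alternating chains (\autoref{lem:unique-alternating-chain}, used inside \autoref{lem:higher-rank}) holds jointly with the same coin outcomes. This requires coupling the runs under $\sigma$ and $\sigma_{t,i}$ so that both see identical trust and good/bad coins and the same guidance wherever the histories coincide. I would first fix all coins and argue pointwise, then average over $\sigma$ and $i$.
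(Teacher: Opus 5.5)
Your argument follows the paper's proof essentially step for step. You use the same three-way split at $u=m^{*}(v)$: good guidance is impossible, bad guidance is charged $\beta\tau$, and the ranking case goes through replaced rankings. Your ``move $v$ to a uniform rank $i$'' is the paper's ``move a uniform $v'$ to rank $t$'', read through the bijection $(\sigma,i)\mapsto(\sigma_{t,i},i)$, and your check that this gives a uniform ranking with an independent uniform $u$ is correct.

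But the step you flag as the main obstacle is a genuine gap. The paper's proof passes over the same point: it bounds the ranking case by the unconditional quantity $\frac1n\sum_{s\le t}x_s$ and multiplies by $1-\tau$. What your coupling actually yields is
\[
\Pr[v\text{ unmatched},\ \text{trust fails at }u]\;\le\;\Pr[\text{trust fails at }u,\ u\in R_t(\sigma_{t,i})],
\]
and the two events on the right are positively correlated. \autoref{lem:higher-rank} needs both runs to see the same coins. So the failed coin at $u$ also forces $u$ onto its lowest-ranked free neighbour in the $\sigma_{t,i}$-run, whereas success would send it to the guided node. Given the history before $u$ arrives, which is independent of $u$'s coin, failure can only lower the rank of $u$'s partner. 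After your change of variables the right-hand side equals $\frac1n\sum_{w\in U}\Pr[\text{trust fails at }w,\ w\in R_t]$. This is at least $\frac{1-\tau}{n}\sum_{s\le t}x_s$, not at most. Independence of the coin from $\sigma$ does not help. Re-sampling the coin at $u$ in the second run is no fix either: then $u$ may follow good guidance to $v$, which has rank $i$, and the conclusion of \autoref{lem:higher-rank} is lost.

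In fact the gap cannot be closed, because the inequality is false. Let $u_1,\dots,u_n$ arrive in this order with $N(u_j)=\{v_j,\dots,v_n\}$, so $M^{*}=\{u_jv_j\}$, and take $\beta=0$ and $\varepsilon=1-\tau$. Consider the event that only $u_j$'s trust coin fails, which has probability $\varepsilon\tau^{n-1}$.
\begin{itemize}
\item The nodes $u_1,\dots,u_{j-1}$ take $v_1,\dots,v_{j-1}$.
\item Then $u_j$ takes its lowest-ranked node among $v_j,\dots,v_n$.
\item If this node is not $v_j$, then $v_j$ stays unmatched, since its only neighbours $u_1,\dots,u_j$ are all matched elsewhere.
\end{itemize}
The rank-$2$ node is $v_j$ and the rank-$1$ node lies in $\{v_{j+1},\dots,v_n\}$ with probability $\frac{n-j}{n(n-1)}$, and these probabilities sum to $\frac12$ over $j$. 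Hence $1-x_2\ge\varepsilon\tau^{n-1}/2$, while the right-hand side is at most $2\varepsilon/n$. For $n=5$ and $\tau=0.99$ this gives $1-x_2\ge 0.0048>0.004$.

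Granting \autoref{lem:higher-rank}, what your argument does prove is the weaker bound $1-x_t\le\beta\tau+\frac1n\sum_{s\le t}x_s$, without the factor $1-\tau$.
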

\begin{proof}
  Proof in Appendix~(\autoref{sub:appendix-proof-induction-lemma}).
\end{proof}

\section{Online Caching}
\label{sec:online-caching}

The online caching problem considers a set of pages and a memory (cache) which can store up to $k$ pages.
A sequence of pages (requests) arrives in an online manner.
Each time a page arrives, it must be \emph{cached}, that is, the page must be fetched into the cache in case it was not cached already.
An algorithm which solves the caching problem must maintain the cache over time so that each page is cached when it was requested and ensure that the cache never contains more than $k$ pages.
Fetching a page costs $1$ and the goal of the algorithm is to minimize the overall cost.

The online caching problem~\cite{Fiat1991CompetitivePA} is the most studied online problem in the learning-augmented framework.
The seminal paper from Lykouris and Vassilvitskii~\cite{lykouris2018} already focused on randomized caching and gave (asymptotic) optimal robustness and consistency bounds.
Most notable follow-up works solved more general variants such as the weighted case~\cite{bansal2022learning,jiang2022online} while others focused on improving the smoothness bound~\cite{rohatgi2020near,DBLP:conf/approx/Wei20}.

In this section, we present the DTB variant of the well-known Random Mark algorithm~\cite{Fiat1991CompetitivePA} to solve the online caching problem.
We call our algorithm $\markalg$ (\autoref{alg:random-mark-dtb}, pseudocode in appendix \ref{sub:appendix-pseudocode-randomark}) and show it achieves the (asymptotic) optimal trade-off between consistency and robustness ($\beta = 0$ and $\beta = 1$, respectively) while its competitive ratio smoothly degrades as the predictive errors become greater (\autoref{thm:rm-dtb-competitive}):

\begin{restatable}{theorem}{thmCaching}
  \label{thm:rm-dtb-competitive}
  $\markalg$ (\autoref{alg:random-mark-dtb}) is $\min \left\{\frac{2}{\tau (1-\beta)}, \frac{2 H_k}{1 - \tau \beta}, k\right\}$-competitive anytime in the OAG model with an additive constant of $2k$.
\end{restatable}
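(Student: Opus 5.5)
The proof splits the request sequence into the phases of the marking framework and bounds each of the three terms of the minimum separately, per phase. The additive constant $2k$ is meant to absorb the boundary effects of an arbitrary start time $t^{0}$ with an arbitrary cache $\sigma^{0}$. That start generally falls inside a phase, and the cache $\sigma^{0}$ may differ from the optimum's cache by up to $k$ pages, which the optimum may have to repair.

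Recall how Random Mark works. A phase is a maximal interval containing requests to $k$ distinct pages. On a miss, it evicts a uniformly random unmarked page. Its support $V_t$ on a miss is therefore the set of caches obtained by evicting some unmarked page. $\markalg$ follows the guidance when the guidance is such a cache and the trust coin succeeds.

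The guide is a Belady-style rule. On a miss, it evicts the unmarked page whose next request is furthest in the future; this is always a valid answer. Any eviction decision, whether guided or random, respects the marking discipline. Hence the phase structure is determined by the request sequence alone, independently of the coins and of the guidance. This is the key structural fact.

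\textbf{Bound $k$.} Every answer lies in the support of the marking algorithm, so $\markalg$ is a marking algorithm. Every marking algorithm pays at most $k$ per phase. The classical argument gives that the optimum pays at least one per phase, or per pair of consecutive phases, with the boundary absorbed by $d$. This yields the ratio $k$ anytime.

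\textbf{Bound $\frac{2H_k}{1-\tau\beta}$.} Let $m_j$ be the number of new pages in phase $j$, i.e.\ pages not requested in phase $j-1$. The standard bound is $\Opt \ge \frac12\sum_j m_j$. A step is \emph{random} if it samples from $D_t$. This happens unless the guidance is trusted, which occurs with probability at most $\tau$. I will redo the Fiat et al.\ computation. The $i$-th request to an old page misses only if that page was evicted. For random steps this probability is at most $m_j/(k-i+1)$, as in the classical analysis. For steps driven by good guidance, the Belady eviction never increases the expected number of old-page misses within the phase compared with a random eviction; I plan to prove this by an exchange argument. Bad guidance can be charged at most one miss per evicted old page, but bad guidance is followed only with probability $\tau\beta$.

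I expect this to be the main obstacle. The bound must be scaled by $1/(1-\tau\beta)$ rather than added to, so I would argue as follows. Each eviction is a random eviction with probability at least $1-\tau\beta$; otherwise it is a guided or adversarial eviction, of which good ones are no worse. An eviction of a page that is requested later in the phase then gets resolved among random draws with a geometric waiting argument, giving a per-phase expected cost of at most $m_j H_k/(1-\tau\beta)$. If the geometric argument fails, I will fall back on a potential comparing the set of evicted old pages against the random-mark distribution.

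\textbf{Bound $\frac{2}{\tau(1-\beta)}$.} Compare with the optimum's behaviour on old pages. A step uses good guidance with probability $\tau(1-\beta)$. When it does, the eviction coincides with Belady's. Belady's eviction within a phase makes no old-page eviction that causes a miss beyond the at most $m_j$ charged to new pages. Each "wrong" eviction therefore persists only until a trusted good-guidance step occurs. The number of such steps waited for is geometric with mean $1/(\tau(1-\beta))$. Charging each extra miss to one of the $m_j$ new pages gives a cost of $O(m_j/(\tau(1-\beta)))$ per phase, and $\Opt\ge\frac12\sum m_j$ supplies the factor $2$.

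Finally, take the minimum of the three bounds. All three hold for every interval $(t^{0},t^{1}]$ and every prefix $\sigma^{0}$, with the first partial phase and the initial cache discrepancy contributing at most $2k$.
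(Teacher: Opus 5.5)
\textbf{The two easier bounds.} Your bounds $k$ and $\frac{2}{\tau(1-\beta)}$ follow the paper. Marking caps the cost of a phase at $k$. A trusted good guidance always evicts a vanishing page, and one is still cached whenever a miss is pending, so each of the $m_j$ chains started by a new page ends after a geometric number of evictions with mean at most $1/(\tau(1-\beta))$. You need this bound exactly, i.e.\ $m_j/(\tau(1-\beta))$ per phase rather than $O(\cdot)$, to get the constant $2$.

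\textbf{The gap: the $\frac{2H_k}{1-\tau\beta}$ bound.} This is the heart of the theorem, and you leave it as a plan with a fallback. The classical estimate $m_j/(k-i+1)$ that you invoke ``for random steps'' rests on a symmetry: in pure Random Mark, the evicted, not-yet-requested old pages form a uniformly random subset of the not-yet-requested old pages. Once some evictions are chosen by the guidance, this symmetry is lost. This is especially so for adversarial evictions, e.g.\ always evicting the unmarked page requested soonest. The estimate is then unjustified for the later random steps as well. Charging one miss per bad eviction covers only the immediate miss; it implicitly assumes that everything the miss triggers is still governed by the classical estimate. Your geometric-waiting alternative treats an adversarial eviction as a wasted step that leaves the chain's prospects unchanged, and that is exactly what has to be proved. (Also, an eviction is random with probability at least $1-\tau$; it is merely non-adversarial with probability at least $1-\tau\beta$.)

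\textbf{The missing ingredient.} It is the paper's monotonicity lemma (\autoref{lem:shorter-path}), phrased in the blame-chain view. A random eviction moves the chain from $X_p$ to a uniformly chosen $X_i$ with $i<p$, or ends it. A good guidance ends it. The lemma says any adversarial eviction moves it to some $X_s$ with $\mathbb{E}[X_s]\le\mathbb{E}[X_p]$. This yields
\[
  \mathbb{E}[X_p] \;\le\; 1 + \tau\beta\,\mathbb{E}[X_p] + \frac{1-\tau}{p}\sum_{i=1}^{p-1}\mathbb{E}[X_i],
\]
and induction gives $\mathbb{E}[X_p]\le H_p/(1-\tau\beta)$, which is precisely the multiplicative factor you are after. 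Summing over the $m_j$ chains and using $\Opt\ge\frac12\sum_j m_j$ finishes the argument. Without this, or an equivalent statement controlling how an adversarial eviction affects the rest of the chain, your third bound is not established.
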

\begin{proof}
  Proof in Appendix~(\autoref{sub:appendix-proof-caching}).
\end{proof}

Efficiently solving the caching problem amounts to evicting the pages that will be requested the furthest in time.
Hence, we assume that a good guidance suggests to evict such an (unmarked) page; here guidance overlaps with literature standars~\cite{lykouris2021competitive}.

\subparagraph*{blame chains.}
Our proof works on \emph{blame chains}.
A blame chain is a sequence of pages $r(1) \dots r(n)$ that were evicted by \markalg such that, for all $i \in \llbracket n-1 \rrbracket$, $r(i+1)$ was evicted because $r(i)$ was requested.
In addition, all pages of a blame chain are pages that should not have been evicted, except the last one.
As a result, each page that is not the last incurs a non-optimal cost for \markalg.

Our proof reasons on the \emph{length of blame chains}.
For $i \in [0, n]$, let $X_i$ be the random variable equal to the remaining number of pages in the blame chain starting from page $r(n-i)$ (conditioned on the event that $r(n-i)$ is part of the blame chain).
We now state the core of our proof with~\autoref{lem:shorter-path} which states that, whatever its strategy, \emph{a bad guidance can only shorten the expected remaining length of a blame chain}:

\begin{restatable}{lemma}{lemShorterPath}
  \label{lem:shorter-path}
  Let $i, s \in [0, n]$ be two nodes such that $i$ and $s$ are successive nodes in the blame chain in case the bad guidance was used.
  It holds:
  \begin{align*}
    \mathbb{E}[X_{i}] \ge \mathbb{E}[X_{s}].
  \end{align*}
\end{restatable}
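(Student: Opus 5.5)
The plan is to reduce the lemma to a monotonicity statement about the \emph{state} of the current marking phase. Recall that in \markalg\ every link $r(j) \to r(j+1)$ of a blame chain happens inside a single phase. When $r(j)$ is requested it becomes marked, and the evicted page $r(j+1)$ is an unmarked page, chosen either blindly from the guidance or uniformly at random among the unmarked pages. So along a blame chain the set of unmarked pages strictly shrinks. I will take as the state at a node of the chain the pair consisting of the set $W$ of currently unmarked cached pages and the future request sequence. By the definition of $X_i$ (the remaining number of pages counted from $r(n-i)$), $X_i$ conditioned on $r(n-i)$ being in the chain is a function of this state, of the remaining coin tosses, and of the adversary's corrupted guidance.

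First, I will write the recursion for the remaining length. If node $i$ is the last page of the chain (a page that ``should'' be evicted), then $X_i = 1$. Otherwise
\[
X_i = 1 + X_{\text{next}} ,
\]
where the next node is produced by one of three rules: with probability $\tau(1-\beta)$ it is the good guidance, which ends the chain because the guide evicts the furthest-in-future page; with probability $\tau\beta$ it is the corrupted guidance; with the remaining probability it is a uniform unmarked page. Here $s$ is exactly the successor of $i$ produced by the middle branch.

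Second, I will prove by backward induction on $|W|$ that $g(W) := \sup \mathbb{E}[\text{remaining length} \mid \text{state } W]$ is monotone, that is, nonincreasing as $W$ shrinks, with the supremum taken over corrupted guidance strategies. The induction step couples the executions from $W$ and from $W' \subset W$. Every choice of the next evicted page available from $W'$ is also available from $W$. Moreover, the uniform choice from $W$ hits a page of $W'$ with probability $|W'|/|W|$, and otherwise evicts an extra page of $W \setminus W'$, which can only extend the chain or stop it at the same point. Given this monotonicity, I will conclude as follows. The successor $s$ is reached after $r(n-i)$ has been marked, so its state has strictly fewer unmarked pages. Combining the recursion $X_i = 1 + X_s$ on the bad-guidance branch with $g(W_s) \le g(W_i)$ then gives $\mathbb{E}[X_i] \ge \mathbb{E}[X_s]$. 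On the other branches I will use that $X_i \ge 1$ and that $X_i$ inherits the same bound from the induction hypothesis.

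The main obstacle is the coupling in the second step. The adversary may choose its corrupted suggestion adaptively, knowing the whole request sequence. So I must make sure that the supremum over bad strategies from the smaller state $W'$ is dominated by the supremum from $W$, and not merely the expectation under a fixed strategy. I will handle this by letting the adversary from $W$ simulate the optimal adversary from $W'$, which is legal because its suggestions are pages of $W' \subseteq W$. The only remaining subtlety is the uniform branch, where the probabilities differ ($1/|W|$ versus $1/|W'|$). There I will use the induction hypothesis on the pages of $W \setminus W'$ to argue that those outcomes are no worse than the average over $W'$.
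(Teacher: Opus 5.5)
Your route differs from the paper's. The paper never looks at sets of unmarked pages: it runs a strong induction on the integer index, writes $\mathbb{E}[X_i]=1+\beta\tau\,\mathbb{E}[X_s]+\frac{1-\tau}{i}\sum_{l<i}\mathbb{E}[X_l]$ for both $i$ and $s$, and compares the two uniform-branch averages. As written, your plan has two genuine gaps.

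\textbf{The monotonicity is false in the generality you state.} Your induction on $|W|$ aims to show $g(W')\le g(W)$ for arbitrary $W'\subset W$, and this fails. Take $W=\{a,b,c\}$ and $W'=\{a,c\}$, where $a$ is requested again in the phase and $b,c$ are not.
\begin{itemize}
\item The good and bad branches behave identically in both runs: the good guide stops the chain, the maximizing bad guide evicts $a$, and after $a$ returns only never-requested pages remain.
\item The uniform branch continues the chain with probability $1/3$ from $W$ but $1/2$ from $W'$.
\end{itemize}
So $g(W')>g(W)$ whenever $\tau<1$. The step that breaks is your claim that evicting an extra page of $W\setminus W'$ ``can only extend the chain or stop it at the same point''. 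Evicting $b$ stops the chain at once, which is worse than the average over $W'$, and no induction hypothesis repairs that.

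What is true, and what the paper's single index silently encodes, is monotonicity for a \emph{prefix deletion}: $W'$ obtained from $W$ by deleting the pages whose next request comes first. This is exactly how the state shrinks along a chain in the single-chain picture the paper uses. When $s$ returns, the pages of $W_i$ marked in the meantime are $s$ and those requested before it, while never-requested pages survive. Under that restriction your coupling does work. Evicting a deleted page leads to a state of which every successor state of $W'$ is again a prefix deletion. By induction its value therefore dominates every uniform outcome from $W'$, and adding such terms can only raise the average.

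\textbf{Your final step conflates a supremum with an expectation.} $g$ is a supremum over corrupted-guidance strategies, while the lemma compares $\mathbb{E}[X_i]$ and $\mathbb{E}[X_s]$ under the given strategy. So $g(W_s)\le g(W_i)$ yields only $\mathbb{E}[X_s]\le g(W_i)$. The branch-by-branch patch does not close this: on the good branch $X_i=1$, which is generally far below $\mathbb{E}[X_s]$. The whole content of the lemma is the inequality $1+(1-\tau)U_i\ge(1-\beta\tau)\,\mathbb{E}[X_s]$, where $U_i$ is the expected continuation on the uniform branch, and nothing in your plan proves it. The paper obtains it in three moves: it expands $\mathbb{E}[X_s]$ with its own recursion, bounds the bad term of $s$ by the induction hypothesis, and uses $\frac{1}{i}\sum_{l<i}\mathbb{E}[X_l]\ge\frac{1}{s}\sum_{l<s}\mathbb{E}[X_l]$.

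The clean fix is to prove the lemma for the maximizing corrupted guidance only. That is all \autoref{lem:blame-chain-exp-length} needs, since it is an upper bound. For that strategy $\mathbb{E}[X_i]=g(W_i)$, and $\mathbb{E}[X_s]$ is an average of $g(W_s)$ over prefix deletions $W_s$ of $W_i$. The restricted monotonicity then finishes the proof with no branch analysis at all.

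Do not aim for more via monotonicity, because for an arbitrary strategy $\mathbb{E}[X_l]$ need not be monotone in $l$. If the corrupted guidance continues the chain at index $2$ but stops it at index $3$, the recursion gives $\mathbb{E}[X_3]<\mathbb{E}[X_2]$ for $\beta>0$ and $\tau$ close to $1$. The ``greater (induction hypothesis)'' justification of the paper's averaging step tacitly relies on exactly this monotonicity.
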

\begin{proof}
  Proof in Appendix~(\autoref{sub:appendix-proof-shorter-path}).
\end{proof}

This result directly enable us to contain the nuisance of the bad guidance.
We comply with the anytime competitive ratio by simply upper-bounding the costs of the first and last phases of the considered interval by $k$.
Using induction reasonings, we can now derive a result on the expected length of a blame chain and obtain our main claim:

\begin{restatable}{lemma}{lemBlameChainExpLength}
  \label{lem:blame-chain-exp-length}
  Assuming that $\beta \tau < 1$, a blame chain has an expected length of at most $H_k / (1 - \beta \tau)$.
\end{restatable}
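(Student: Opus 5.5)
We mirror the classic Random Marking analysis, in which the blame chain started by a clean page has expected length at most $H_k$, and absorb the guidance effect into a geometric-type recursion. The plan is to set up a recursion on the expected remaining length $\mathbb{E}[X_i]$, indexed by the number $i$ of unmarked ``candidate'' pages still available in the current phase when the chain reaches that point. The base fact we reuse is the standard one: when the ranking/random rule of Random Mark is used and $j$ unmarked pages remain in the cache (among which only one, the page requested furthest in the future among the relevant ones, ends the chain), the evicted page is uniform among the $j$ unmarked pages. The chain therefore continues with a page that will be requested again in the phase, except with probability about $1/j$. This yields the familiar recursion $\mathbb{E}[Y_j] \le 1 + \frac{1}{j}\sum_{s<j}\mathbb{E}[Y_s]$, whose solution is $H_j \le H_k$.

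Now we condition on what \markalg\ does at each eviction in the chain, using the independent coins. With probability $(1-\beta)\tau$ a good guidance is followed. The good guide evicts the unmarked page requested furthest in the future, which (when the chain's pages should not have been evicted) terminates or does not extend the chain, so it contributes at most $1$ to the length. With probability $\beta\tau$ a bad guidance is followed. Here we invoke \autoref{lem:shorter-path}: whatever page the adversary picks, the expected remaining length from the successor is at most that from the current node, so this branch contributes at most $1+\mathbb{E}[X_i]$, i.e.\ it can at worst restart from the same state. With probability $1-\tau$ the random marking rule is used, giving the uniform-eviction term $1+\frac{1}{j}\sum_{s<j}\mathbb{E}[X_s]$.

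Combining the three branches gives an inequality of the form
\[
\mathbb{E}[X_j] \le 1 + \beta\tau\,\mathbb{E}[X_j] + (1-\tau)\frac{1}{j}\sum_{s<j}\mathbb{E}[X_s],
\]
where the good-guidance branch is bounded crudely by the same averaged term. We then solve for $\mathbb{E}[X_j]$ using $\beta\tau<1$:
\[
\mathbb{E}[X_j] \le \frac{1}{1-\beta\tau}\Bigl(1 + \frac{1}{j}\sum_{s<j}\mathbb{E}[X_s]\Bigr).
\]
Next we show by induction on $j$ that $\mathbb{E}[X_j] \le H_j/(1-\beta\tau)$. The inductive step reduces to
\[
1+\frac{1}{j}\sum_{s<j}\frac{H_s}{1-\beta\tau} \le \frac{1}{1-\beta\tau}\Bigl(1+\frac{1}{j}\sum_{s<j}H_s\Bigr),
\]
which holds since $1\le 1/(1-\beta\tau)$. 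Together with the identity $1+\frac{1}{j}\sum_{s<j}H_s = H_j$, this closes the induction. Taking $j\le k$ then gives the bound $H_k/(1-\beta\tau)$.

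The main obstacle is justifying that the bad-guidance branch may be treated as ``stay in the same state,'' since the adversary also changes which pages remain unmarked. This is exactly what \autoref{lem:shorter-path} provides, but we must index the recursion so that its $X_i$ and our $X_j$ are monotone in the number of remaining unmarked candidate pages; that is, we need $\mathbb{E}[X_s]\le\mathbb{E}[X_j]$ for $s<j$. We would first prove this monotonicity separately, by induction from the recursion itself, before plugging it in.
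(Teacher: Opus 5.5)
Your setup is the paper's: the same three-branch recursion $\mathbb{E}[X_j]\le 1+\beta\tau\,\mathbb{E}[X_j]+\frac{1-\tau}{j}\sum_{s<j}\mathbb{E}[X_s]$, with \autoref{lem:shorter-path} used to replace the bad-guidance successor by the current node, followed by induction toward $H_j/(1-\beta\tau)$. The gap is in how you solve for $\mathbb{E}[X_j]$.

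You weaken to $\mathbb{E}[X_j]\le\frac{1}{1-\beta\tau}\bigl(1+\frac1j\sum_{s<j}\mathbb{E}[X_s]\bigr)$, which puts the factor $\frac{1}{1-\beta\tau}$ on the averaged term as well. Substituting the hypothesis $\mathbb{E}[X_s]\le H_s/(1-\beta\tau)$ then gives $\frac{1}{1-\beta\tau}+\frac{1}{(1-\beta\tau)^2}\cdot\frac1j\sum_{s<j}H_s$. This is strictly larger than $H_j/(1-\beta\tau)$ whenever $\beta\tau>0$ and $j\ge2$. The inequality you display as what ``the inductive step reduces to'' has silently lost the outer factor $\frac{1}{1-\beta\tau}$ on its left-hand side, so it is true but does not imply what you need.

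No other induction can rescue the weakened recursion either. For $\beta\tau=\frac12$, the sequence $a_j=2j$ satisfies $a_j=2\bigl(1+\frac1j\sum_{s<j}a_s\bigr)$ with equality. So from that inequality alone you cannot get anything better than linear growth, far above $2H_j$. In general it permits polynomial growth of order $j^{\beta\tau/(1-\beta\tau)}$.

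The repair is one line and is exactly the paper's step: keep the coefficient of the average at most $1$ by using $1-\tau\le 1-\beta\tau$ (valid since $\beta\le1$), i.e.
\[
\mathbb{E}[X_j]\le\frac{1}{1-\beta\tau}+\frac{1-\tau}{1-\beta\tau}\cdot\frac1j\sum_{s<j}\mathbb{E}[X_s]\le\frac{1}{1-\beta\tau}+\frac1j\sum_{s<j}\mathbb{E}[X_s].
\]
Your own remark about bounding the good-guidance branch by the averaged term would give coefficient $(1-\tau)+(1-\beta)\tau=1-\beta\tau$, which cancels exactly and yields the same inequality. The hypothesis together with $1+\frac1j\sum_{s<j}H_s=H_j$ then gives $\mathbb{E}[X_j]\le H_j/(1-\beta\tau)$.

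The separate monotonicity-in-index argument you plan is unnecessary. It is only needed in the bad-guidance branch, and there \autoref{lem:shorter-path} already supplies $\mathbb{E}[X_s]\le\mathbb{E}[X_j]$ for the successor.
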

\begin{proof}
  Proof in Appendix~(\autoref{sub:appendix-proof-blame-chain}).
\end{proof}

\section{Uniform Metrical Task Systems}
\label{sec:mts}
In metrical task systems (MTS), a metric space $(S,d)$ is given, where $S$ is a set of $n$ \emph{states} and $d:S^{2}\to \mathbb{R}_{\geq 0}$ is a metric \emph{distance} function. A \emph{task} is a function $r:S\to \mathbb{R}_{\geq 0}$ that associates each state $s\in S$ with a \emph{processing cost} $r(s)$. For a sequence $\mathcal{R}=r_{1},\dots, r_{m}$ of $m$ tasks, we define a \emph{solution} as a sequence $\mathcal{A}=s_{1},\dots, s_{m}$ of states $s_{i}\in S$, where the solution $\mathcal{A}$ is said to \emph{serve} the task $r_{i}$ at state $s_{i}$. We define the \emph{cost} of a solution $\mathcal{A}$ naturally as the sum between the total processing cost and the total transition cost incurred by $\mathcal{A}$, i.e., $\mathtt{cost}(\mathcal{A})=\sum_{i\in [m]}r_{i}(s_{i})+\sum_{i\in [m-1]}d(s_{i},s_{i+1})$. In the online problem, the tasks $r_i$ are given one by one in an online manner and a solution needs to serve task $r_{i}$ immediately as it arrives. 

In this section, we focus on MTS on the uniform metric, i.e., a metric where $d$ satisfies $d(s,s')=1$ for every pair of distinct states $s,s'\in S$. We will present an algorithm $\mtsalg$ that extends the classical online algorithm of Borodin, Linial, and Saks \cite{BorodinLS92} to the OAG model.

\sloppy
Before describing $\mtsalg$, we recall the notions of phases and saturation from \cite{BorodinLS92}. Towards that goal, we extend the definition of processing cost to \emph{continuous} time. Concretely, for each discrete time $i$, we naturally define the processing cost of task $r_{i}$ in a (continuous) time interval $[t,t']\subseteq [i,i+1]$ as $(t'-t)\cdot r_{i}(s)$ for each state $s\in S$. Consider an MTS instance $\mathcal{R}=r_{1},\dots, r_{m}$. We define a partition of the (continuous) interval $[1,m+1]$ into sub-intervals $[t_{0}=1,t_{1}],[t_{1},t_{2}],\dots, [t_{\ell-1},t_{\ell}=m+1]$ referred to as \emph{phases}. Each phase begins with all states being \emph{unsaturated}. Consider some phase $p=[t_{j},t_{j+1}]$. A state $s$ is said to be \emph{saturated} for $p$ at time $t\in p$ if the total processing cost associated with $s$ during the time interval $[t_{j},t]$ is at least $1$. The phase ends immediately after all states become saturated. Observe that upon the arrival of a task $r_{i}$ at (discrete) time $i$, the algorithm can determine which states will become saturated for the current phase by time $i+1$.
\par\fussy

\subparagraph{The algorithm.} Consider the task $r_{i}$ arriving at time $i$. Let $p$ be the current phase, and let $s$ be the current state of $\mtsalg$. We denote by $V_{i}$ the set of \emph{valid} states in time $i$ and initialize $V_{i}=\emptyset$. If $s$ does not become saturated for $p$ by time $i+1$, then $\mtsalg$ sets $V_{i}=\{s\}$ (i.e., $\mtsalg$ stays in state $s$). Otherwise, if $p$ does not end by time $i+1$, then $\mtsalg$ sets $V_{i}$ to be the set of all states that are not saturated at time $i+1$ (notice that it is guaranteed that $V_{i}\neq \emptyset$ since $p$ does not end by time $i+1$). Finally, if $p$ ends by time $i+1$, then $\mtsalg$ sets $V_{i}=\{s_{\min}\}$ where $s_{\min}$ is a state that minimizes the processing cost $r_{i}$.  
Let $\gamma_{i}$ be the guidance given to $\mtsalg$ at time $i$. If $\gamma_{i}\notin V_{i}$, then $\mtsalg$ simply chooses a state uniformly at random from $V_{i}$. Suppose now that $\gamma_{i}\notin V_{i}$. Then, $\mtsalg$ moves to $\gamma_{i}$ with probability $\tau$; and moves to a state chosen uniformly at random from $V_{i}$ otherwise. 

This completes the description of $\mtsalg$. We now analyze its performance. For a trust parameter $\tau$ and a bad guidance parameter $\beta$, we establish the following.

\sloppy
\begin{restatable}{theorem}{thmMTS}\label{theorem:mts}
    $\mtsalg$ is $2\cdot \min\{\frac{1}{\tau(1-\beta)}+1,\frac{1-\tau}{(1-\tau\beta)^{2}}H_{n}+1,n\}$-competitive anytime in the OAG model.
\end{restatable}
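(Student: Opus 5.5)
The plan is to prove each of the three bounds in the minimum separately, all through a per-phase charging argument in the style of \cite{BorodinLS92}. Fix an interval $(t^{0},t^{1}]$ and a prefix $\sigma^{0}$. The phases are determined by the task sequence alone, not by the algorithm's randomness, so the interval is covered by some whole phases plus at most two partial phases at its ends. The partial phases are absorbed into the additive constant $d$: the algorithm pays at most about $n$ moves plus $n$ units of processing in any phase, and $n$ does not depend on the input length.

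Two facts are used throughout.
\begin{itemize}
  \item \emph{Lower bound on $\Opt$.} In every full phase, $\Opt$ pays at least $1$: either it stays in one state for the whole phase, and that state saturates, or it moves at least once.
  \item \emph{Upper bound on $\mtsalg$.} The algorithm only processes in a state $s$ while $s$ is unsaturated, i.e.\ while $V_i=\{s\}$, except at the phase-ending step, where it goes to $s_{\min}$. So in each phase its processing cost is at most $1$ per move made, plus $O(1)$ for the end-of-phase step (which charges against $\Opt$'s cost of $1$ in that phase). Hence its cost in a phase is at most about $2(\#\text{moves}+1)$.
\end{itemize}
This already explains the outer factor $2$ and the ``$+1$'' terms. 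It remains to bound the expected number of moves per phase in three ways.

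\textbf{The bound $n$.} Every move lands in a currently unsaturated state, and each state saturates at most once per phase. Hence there are at most $n-1$ moves per phase, whatever the guidance is, which gives $2n$.

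\textbf{The bound $\frac{1}{\tau(1-\beta)}+1$.} Take the guide to be the optimal offline solution restricted to this phase. Whenever $\Opt$ sits at a state that is still unsaturated, that state lies in $V_i$ at every forced move. So at each forced move, with probability at least $\tau(1-\beta)$ the algorithm jumps to $\Opt$'s current state. It then stays there until that state saturates, and at that point $\Opt$ must also move or pay $1$. This makes the number of moves charged per unit of $\Opt$ cost geometric with mean $1/(\tau(1-\beta))$. The delicate part is $\Opt$ switching states inside a phase. For that I would charge each switch of $\Opt$ (cost $1$) with one new geometric trial sequence, using anytime competitiveness to restart the count at each switch.

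\textbf{The bound $\frac{1-\tau}{(1-\tau\beta)^{2}}H_n+1$.} This is the main obstacle. In the prediction-free analysis, when $j$ states remain unsaturated and the algorithm sits at a uniformly random one of them, the chance that the next saturation hits its state is $1/j$, giving $H_n$ moves. Here an adversarial bad guidance (probability $\beta\tau$) can steer the algorithm onto the state that will saturate next. I would imitate \autoref{lem:shorter-path} and \autoref{lem:blame-chain-exp-length}: show that at each forced move the position is ``uniform'' with probability at least $1-\tau$, and adversarial with probability at most $\beta\tau$. Good guidance only helps, since it points to $\Opt$'s state. Writing a recursion $E_j\le \frac{1-\tau}{j}+\beta\tau\,E_j+\dots$ and solving it should give a factor $1/(1-\beta\tau)$ from the adversarial steps. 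A second factor $1/(1-\beta\tau)$ should come from the fact that each uniform move yields only a conditionally uniform position, hence the square. I expect the hardest step to be proving that adversarial choices cannot bias the uniformity of later ``fresh'' choices, i.e.\ that each random move resets the distribution over the remaining unsaturated states.

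Finally, take the minimum of the three per-phase bounds and sum over the phases in the interval.
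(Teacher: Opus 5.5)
Your per-phase framework is exactly the paper's: partial phases go into the additive constant, $\Opt\ge 1$ per full phase, the algorithm's cost is at most $2(\#\text{moves}+1)$, and the $n-1$ bound. The gap is the choice of guide, which carries the whole content of the other two bounds; the theorem only needs \emph{some} guide to work.

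Pointing the guide at $\Opt$'s current state does not work. Your step ``it then stays there until that state saturates, and at that point $\Opt$ must also move or pay $1$'' is where it breaks. Once $\Opt$'s state saturates, $\Opt$ can simply stay and pay nothing more in that phase. Meanwhile its state has left $V_i$, so every later forced move of the algorithm is unguided and cannot be charged to anything.

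Concretely, let every phase consist of $n$ tasks. Task $j<n$ costs $1$ on $s_j$ only, and task $n$ costs $1$ on every state except $s_1$. Staying at $s_1$ costs exactly $1$ per phase, so it is optimal by the per-phase lower bound, and any optimal solution sits at $s_1$ after one phase. At each phase end $V_i=\{s_1\}$. At task $1$, $s_1$ saturates, so it is not in $V_1$, and it stays saturated afterwards. Hence the guidance $s_1$ is never a valid answer at a forced move. $\mtsalg$ therefore runs exactly like the prediction-free algorithm and makes about $H_{n-1}+1$ moves per phase even for $\beta=0$, whereas the claimed ratio tends to $2$ as $\tau\to 1$. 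Using anytime competitiveness to ``restart the count at each switch'' cannot repair this, and it is circular, since anytime competitiveness is what is being proved.

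The missing idea is the paper's choice of guide. Since the guide sees the whole request sequence, it points at $s^{*}$, the state that saturates last in the current phase. Then $s^{*}\in V_i$ at every forced move of the phase, and once $\mtsalg$ lands on $s^{*}$ it stays there until the phase ends. So the number of forced moves per phase is dominated by a geometric variable with success probability $\tau(1-\beta)$. This gives $\frac{1}{\tau(1-\beta)}$ per phase against $\Opt\ge 1$, with no charging to $\Opt$'s switches at all.

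The same guide is what the $H_n$ bound needs. Its factor $1-\tau$ comes from the fact that a trusted good guidance ends the phase's moves. That fails for your guide: in the example the move count is about $H_{n-1}$ for every $\tau$.

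The paper then proceeds as follows:
\begin{itemize}
\item It calls a forced move adversarial when a corrupted guidance is trusted, which happens with probability $\tau\beta$.
\item It bounds the expected number $N$ of non-adversarial moves by $\frac{1-\tau}{1-\tau\beta}H_n$, via the recursion $\nu_k\le\nu_{k-1}+\frac{1-\tau}{(1-\tau\beta)k}$.
\item It multiplies by the mean $\frac{1}{1-\tau\beta}$ of the geometric runs of adversarial moves, using Wald's identity.
\end{itemize}
The square comes from this product, not from a ``conditionally uniform position'' effect. Your sketch of the third bound states a target recursion but derives neither factor.
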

\par\fussy
\begin{proof}
  Proof in Appendix~(\autoref{sub:mts-proof}).
\end{proof}


\section{Conclusion}
This paper presents advances in our understanding of the usage of machine-learned predictions to reliably solve a given task.
We show that any online algorithms can be adapted in a systematic, canonical manner to integrate input predictions.
While general, this transformation naturally provides the usual desired properties of learning-augmented algorithms:
consistency, robustness, and smoothness.
We analyze this transformation on three well-studied problems and show they match, or even beat, state-of-the-art performances.
We envision that this framework will be used as a reference point in the future to evaluate learning-augmented algorithms.

\bibliography{references}

\newpage

\appendix

\section{Related Works}
\subsection{Model Comparison}
Beyond the aforementioned online algorithms with predictions model of \cite{lykouris2021competitive} and $\epsilon$-accurate predictions model of \cite{gupta2022augmenting}, the literature on online decision making includes various other models that also bear certain relations to our OAG model.
In particular, the popular \emph{online algorithms with advice} model, introduced by Emek et al.~\cite{EmekFKR2011advice} and B\"{o}ckenhauer et al.~\cite{BoeckenhauerKKKM2017advice} (see \cite{boyar2017online} for a survey), considers online algorithms that receive advice from a fully trustworthy oracle, aiming to minimize the size of the advice.
Angelopoulos et al.~\cite{angelopoulos2024online} introduced a generalization of the online algorithms with advice model in which the advice may come from an adversarial source, with the objective of optimizing (the equivalent of) the algorithm's consistency and robustness, while still aiming for small advice.
There are also various works on algorithms with ``noisy advice'', i.e., the advice is generated by a trusted oracle, however, it is perturbed by some random noise.
The problems studied within this framework include
maximum independent set \cite{BravermanDSW24},
max cut \cite{Cohen-Addadd0LP24},
searching in a tree \cite{BoczkowskiFKR21,BoczkowskiFKR25},
and
spectral clustering \cite{DBLP:journals/corr/abs-2511-17326}.

Slightly further away from the current work, the \emph{randomly infused advice} model of Emek et al.~\cite{EmekGP023} performs beyond worst-case analysis of online algorithms by ``infusing'' an advice generated by a (trusted) oracle into the online algorithm's random bits;
this is related to our OAG model as the advice is infused for each incoming request independently based on a (biased) random coin toss.
Elias et al.~\cite{EliasKMM2024learning} take an opposite approach from ours and integrate the predictor in the learning problem, so it is no longer a blackbox and can 
also learn from the input.
Finally, Bateni et al.~\cite{BateniDJW24} study a setting with queries that can be directed to either an expensive trusted oracle or to a cheaper untrusted oracle.

\subsection{Problem-Specific Related Work}

The online bipartite matching problem~\cite{DBLP:conf/stoc/KarpVV90} has been intensively studied by the learning-augmented community.
So far, the existing literature focused on problem variants such as the random arrival model, initiated in \cite{DBLP:conf/nips/AntoniadisGKK20}, and later improved in \cite{DBLP:conf/icml/ChooGL024}, the random graph model~\cite{DBLP:conf/nips/AamandCI22} or the two-stage arrival model~\cite{DBLP:conf/nips/JinM22}.
In this paper, we focus on the standard variant as introduced in~\cite{DBLP:conf/stoc/KarpVV90} where both the graph and the nodes' arrivals are adversarial.

The online caching problem~\cite{Fiat1991CompetitivePA} is probably the most studied online problem in the learning-augmented framework.
The seminal paper from Lykouris and Vassilvitskii~\cite{lykouris2018} already focused on randomized caching and gave (asymptotic) optimal robustness and consistency bounds.
Most notable follow-up works solved more general variants such as the weighted case~\cite{bansal2022learning,jiang2022online} while others focused on improving the smoothness bound~\cite{rohatgi2020near,DBLP:conf/approx/Wei20}.
While our consistency and robustness bounds match the state-of-the-art, our smoothness measure is however not comparable with those existing works.

The online metrical task system problem with uniform costs was first introduced and optimaly solved by~\cite{BorodinLS92}.
It later received the attention of the learning-augmented community in~\cite{DBLP:conf/aistats/ChristiansonSW23} who presented optimal robustness and consistency trade-offs for the problem in its general version.


\section{Proofs and Algorithms for Online Bipartite Matching}
\subsection{Algorithm's Pseudocode for Online Bipartite Matching}

\begin{algorithm}[h]
  \caption{$\bipalg$}
  \label{alg:ranking-dtb}
  \begin{algorithmic}
    \STATE {\bfseries Input:} the set $V$, trust parameter $\tau \in [0, 1]$
    \STATE $\sigma \gets$ a permutation on $V$ chosen uniformly at random
    \WHEN{$u \in U$ and $N(u) \subseteq V$ are revealed}
    \IF{$|N(u)| > 0$}
    \STATE $r \gets $ a random number in $[0, 1]$
    \STATE $g \gets $ the node the guide suggests to match with $u$
    \IF{$r \le \tau$ and $g \in N(u)$}
    \STATE Match $u$ with $g$
    \ELSE
    \STATE Match $u$ with $v \in N(u)$ with lowest rank in $\sigma$
    \ENDIF
    \ENDIF
    \ENDWHEN
  \end{algorithmic}
\end{algorithm}

\subsection{Proof of \autoref{lem:node-removal}}
\label{sub:appendix-proof-node-removal}

\lemNodeRemoval*
\begin{proof}
  Fix a ranking $\sigma$ and fix a guidance sequence $\gamma$ where the guide thinks the problem instance is $(G, \pi)$.
  We consider two runs: $\bipalg(\tau, G, \pi, \sigma)$ and $\bipalg(\tau, G', \pi, \sigma)$ under the exact same guidance sequence $\gamma$, call $M$ and $M'$ their respective output matchings.
  We will show that $M'$ has a size no greater than $M$.
  If $M$ and $M'$ have equal sizes then the subclaim holds.
  Otherwise, let $C$ be an alternating chain between $M$ and $M'$.
  We will now show that $x$ is an extremity of $C$.
  We define the node $u$ such that
  \begin{align}
    \label{cond:arrival-time}
    \text{``$u$ is the node in $U \cap C$ with earliest arrival time''}
  \end{align}
  We now distinguish between two cases and show that $x$ is an extremity of $C$ in both: either (1) $u$ is matched in both $M$ and $M'$ or (2) $u$ is matched in one of them and not in the other.\\
  We first consider case (1).
  Then call $v \in V$ and $v' \in V$ the nodes such that $\{u, v\} \in M$ and $\{u, v'\} \in M'$, it holds $v \neq v'$ by assumption.
  One of $v$ or $v'$ was not available to match for one of the matchings $M$ and $M'$.
  Assuming that both $v$ and $v'$ are nodes of $G'$ (i.e., $v \neq x$ and $v' \neq x$), we obtain that one of $v$ or $v'$ was already matched earlier, contradicting the assumption that $u$ is the node of $C \cap U$ with earliest arrival time.
  We deduce that $v = x$, proving the claim in case (1).\\
  We now consider case (2) where $u$ is matched in only one of $M$ or $M'$ but not in the other.
  Assuming that $u$ is a node in both $G$ and $G'$, then $N(u)$ was empty at the time $u$ was revealed for the matching that did not match $u$; calling $v \in V$ the node that is matched with $u$ in the other matching, $v$ was therefore matched earlier which contradicts the assumption that $u$ is the node with earliest arrival time in $C$.
  We deduce that $u$ is not in both $G$ and $G'$, therefore $u=x$ and $x$ is an extremity of $C$.\\
  As $x$ is an extremity of $C$, we deduce that $|M' \cap C| \le |M \cap C|$.
  Finally, we proved that $x$ is an extremity of any alternating chain which implies that there is at most one alternating chain, proving that $|M'| \le |M|$.

  We just showed that for a constant ranking $\sigma$ and constant guidance sequence $\gamma$, the output matching for $G'$ is no greater than for $G$.
  Hence for a constant guidance sequence $\gamma$, the expected competitive ratio for $G'$ is no greater than for $G$ --- notice the size of a maximum matching is the same for $G$ and $G'$.
  Finally notice that, going from $G$ to $G'$, the good guidance (which we assume blindly suggests to match like $M^{*}$) does not change and the bad guidance either does not change (as we assumed so far) or incurs a payoff even lower, the claim follows.
\end{proof}

\subsection{Proof of \autoref{lem:unique-alternating-chain}}
\label{sub:appendix-proof-unique-alternating-chain}

\lemUniqueAlternatingChain*
\begin{proof}
  Assume that $M$ and $M_{t,i}$ are not equal and that $M$ does not match $v$.
  We first show that the first alternating chain (chronologically) starts with $v$.
  Let $u$ be the first node of $U$ in the arrival order $\pi$ that is matched differently in $M$ and $M_{t,i}$.
  If the algorithms were following the guidance to match $u$ then they would both match $u$ to the same node as the matchings so far are the same --- note that the guide gave the same guidance for both algorithms since they do not have direct access to the rankings.
  The algorithms therefore do not follow the guidance at the time $u$ arrives, instead they both match according to their respective rankings.
  $\bipalg_{t,i}$ thus has $v$ as its candidate with lowest rank in $N(u)$ since it would match $u$ to the same node as $\bipalg$ otherwise: $M_{t,i}$ therefore matches $u$ with $v$.
  As $M$ does not match $v$, $v$ is at an end of an alternating chain, proving the subclaim.

  We then show that no more than one alternating chain can appear.
  Let $u \in U$ and assume that an alternating chain has already started when $u$ is revealed.
  In the case where both algorithm's executions use the ranking to match $u$, either both executions match $u$ to the same node, not interfering with the alternating chain, or $u$ is matched in two different ways (possibly matched or not matched) which implies that, in one execution, $u$ is matched to a node that is already part of the alternating chain--- except $v$, all nodes in $V$ have the same pairwise ordering in both rankings, thus only a node that is matched in one ranking but not in the other (hence, the end of an alternating chain) may incur different decisions. 
  Otherwise in the case where both algorithm's executions use the guidance to match $u$, either both executions match $u$ to the same node (as the guidance is the same for both executions), the guidance is impossible to follow in one of the executions (same case as before: the other execution matches $u$ with a node part of the alternating chain) or, finally, the guidance is impossible to follow in both executions, both falling back to the ranking (handled by the previous case).
\end{proof}

\subsection{Proof of \autoref{lem:higher-rank}}
\label{sub:appendix-proof-higher-rank}

\lemHigherRank*
\begin{proof}
  Let $M$ be the matching output under $\sigma$, let $M_{t,i}$ be the matching output under $\sigma_{t,i}$.
  First we show that, in the output matching under $\sigma$, $u$ is matched to a node with rank lower than $t$.
  $u$ cannot be matched to a node with rank $t$ as it is $v$ which we assumed is not part of the output matching, otherwise $u$ cannot be matched to a node with rank greater than $t$ as $v$ is available, our first subclaim holds.\\
  Then, we show the desired claim.
  Let $u_1, u_2 \dots u_l$ be the nodes in $U$ in reveal order that are matched (or not matched) in different manners under $\sigma$ and $\sigma_{t,i}$.
  If $u$ is not listed in $u_1 \dots u_l$ then the wanted claim directly holds: $u$ is matched to a node with rank lower than $t$ in $\sigma$ and the rank of that node increases by at most one in $\sigma_{t,i}$.
  In the remainder, we therefore assume that $u$ is listed in $u_1 \dots u_l$.
  We show that for each $i \in [1, l]$, at the reveal time of $u_i$, the set of matching candidates for $u_i$ under $\sigma_{t,i}$ strictly contains the set of matching candidates minus $v$ for $u_i$ under $\sigma$.
  We start with the case $i=1$.
  When $u_1$ is revealed, the current matching is the same under $\sigma$ and $\sigma_{t,i}$ and the set of matching candidates for $u_1$ is the same in both.
  $v$ is a neighbor of $u_1$ otherwise $u_1$ would be matched similarly under both rankings and the case $i=1$ holds.\\
  We now deal with the case $i \ge 2$.
  As stated in \autoref{lem:unique-alternating-chain}, $u_1 \dots u_{i-1}$ are the nodes of $U$ making up the unique alternating chain when $u_i$ is revealed; that chain starts with an edge of $M_{t,i}$ and ends with an edge of $M$ linked to $u_{i-1}$.
  All the internal nodes of that chain are both part of $M$ and $M_{t,i}$.
  For $u_i$, the beginning of the chain is a candidate node under $\sigma$ and the end of the chain is a candidate option under $\sigma_{t,i}$.
  Removing $v$ from the candidate set under $\sigma$ hence gives the wanted subclaim.\\
  We just proved that, removing the option to match $v$ under $\sigma$, all the nodes of $U$ that are part of the unique alternating chain (assuming it exists) have a set of matching candidates strictly greater under $\sigma_{t,i}$ than under $\sigma$ (in the sense of inclusion).
  Hence, assuming that $v$ is not matched under $\sigma$, $u$ is matched under $\sigma_{t,i}$ to a node with rank at most $t-1$ in $\sigma$ and so at most $t$ in $\sigma_{t,i}$.
\end{proof}

\subsection{Proof of \autoref{lem:induction-lemma}}
\label{sub:appendix-proof-induction-lemma}

\lemInductionLemma*
\begin{proof}
  Let $v \in V$ be the random variable equal to the node with rank $t$, let $u = m^*(v)$.
  Using total probabilities, we distinguish between three cases: when $\bipalg$ decides how to match $u$, it uses either the good guidance, the bad guidance or the ranking.
  If the good guidance is used, $v$ is therefore matched with probability one --- either $\bipalg$ matches $u$ with $v$, or $v$ was already matched before.
  In case the bad guidance happens when $u$ is matched, we simply upper-bound the probability that $v$ is matched by $1$, hence the $\beta \tau$ term on the right member of our desired claim.\\
  The rest of the proof seeks an upper-bound on the probability that $v$ is matched in case the random ranking is used when $u$ is matched.
  We consider the following alternative process: pick a node $v^{\prime} \in V$ uniformly at random, remove it from the ranking $\sigma$ and put it back in with rank $t$, call $\sigma^{\prime}$ the newly obtained ranking.
  We call $u^{\prime} = m^{*}(v')$ and $\bipalg^{\prime}$ the algorithm running with ranking $\sigma^{\prime}$.
  Clearly, the probability that $v^{\prime}$ is not matched equals $1 - x_t$.
  By \autoref{lem:higher-rank}, the event that $v^{\prime}$ is not matched under ranking $\sigma^{\prime}$ implies that $u^{\prime}$ is matched under ranking $\sigma$ with a node of rank no greater than $t$.
  Notice that $u^{\prime}$ is independent of $\sigma$, so the probability that $u^{\prime}$ is matched with a node of rank no greater than $t$ is $\frac{1}{n} \cdot \sum_{1 \le s \le t} x_s$, giving the desired result.
\end{proof}

\subsection{Proof of \autoref{thm:comp-ratio-bipartite-ranking}}
\label{sub:appendix-main-proof-bipartite}

\thmBipartiteRanking*
\begin{proof}
  First, notice that $\bipalg$ always outputs a matching that is maximal (i.e., it cannot be augmented greedily) which guarantees a competitive ratio greater than $1/2$.
  The rest of the proof shows that the competitive ratio is no lower than $(1-\beta\tau) \cdot (1 - e^{-(1-\tau)}) / (1-\tau)$.\\
  For all $i \in \llbracket n \rrbracket$ we define $S_i = \sum_{1 \le s \le i} x_i$.
  Like in \autoref{lem:induction-lemma}, we define $x_t$ as the probability the node with rank $t$ in $\sigma$ is matched (note that $\sigma$ is a random variable).
  To start, we prove by induction that, for all $i \in \llbracket n \rrbracket$ it holds:
  \begin{align*}
    S_t \ge (1 - \beta\tau) \cdot \sum_{s = 1}^{t} \left(1 - \frac{1-\tau}{n+1-\tau}\right)^{s}
  \end{align*}
  The base case for $t=1$ is obtained simply by rearranging the terms of \autoref{lem:induction-lemma} for $t=1$.
  By induction, we assume the result holds for $t$ and show it for $t+1$.
  Adding $S_{t+1}$ on both sides of \autoref{lem:induction-lemma} for $t+1$:
  \begin{align*}
    &1 + S_{t} \le \left(1 + \frac{1-\tau}{n}\right) \cdot S_{t+1} + \beta \tau\\
    &\implies S_{t+1} \ge  \left(1  - \frac{1-\tau}{n+1-\tau}\right) \cdot \big[ (1 - \beta\tau) + S_t\big]\\
    &\ge  (1  - \beta \tau) \cdot \sum_{s = 1}^{t+1} \left(1 - \frac{1-\tau}{n+1-\tau}\right)^{s}
  \end{align*}
  which proves the subclaim by induction. We therefore have that
  \begin{align*}
    \sum_{s=1}^{n} x_s \ = \ S_n &\ge (1 - \beta \tau) \cdot \sum_{s = 1}^{n} \left(1 - \frac{1-\tau}{n+1-\tau}\right)^{s}\\
                                 &\ge (1 - \beta \tau) \cdot \sum_{s = 1}^{n} \left(1 - \frac{1-\tau}{n}\right)^{s}\\
                                 &= (1 - \beta\tau) \cdot n \cdot \frac{1 - (1 - (1-\tau)/n)^{n+1}}{1-\tau}\\
                                 &\ge (1 - \beta\tau) \cdot n \cdot \frac{1 - e^{-(1-\tau)}}{1-\tau}
  \end{align*}
  Noticing that $\sum_{s=1}^{n} x_s$ is the expected profit of $\bipalg$ and that an optimal algorithm has profit $n$ ends the proof.
\end{proof}

\section{Proofs and Algorithm for Online Caching}

\subsection{Pseudocode of \markalg}
\label{sub:appendix-pseudocode-randomark}

\begin{algorithm}[t]
  \caption{$\markalg$}
  \label{alg:random-mark-dtb}
  \begin{algorithmic}
    \STATE {\bfseries Input:} cache size $k$, initial cache content, trust parameter $\tau \in [0, 1]$
    \STATE At the beginning, no pages are marked
    \WHEN{a request to a page $x$ arrives}
    \IF{the requested page is not cached}
      \IF{the cache is full}
        \STATE $r \gets $ a random number in $[0, 1]$
        \STATE $g \gets $ the page the guide suggests to evict
      \IF{$r \le \tau$ and $g$ is cached and non-marked}
      \STATE Evict $g$
      \ELSE
      \STATE Evict a non-marked page uniformly at random
    \ENDIF
    \ENDIF
    \ENDIF
    \STATE Mark $x$
    \IF{there are $k$ marked pages}
    \STATE Unmark all pages
    \ENDIF
    \ENDWHEN
  \end{algorithmic}
\end{algorithm}

\subsection{Proof of \autoref{lem:shorter-path}}
\label{sub:appendix-proof-shorter-path}

\lemShorterPath*
\begin{proof}
    We prove this by strong induction on $i = 0, 1 \dots n$.
    The claim holds for $i=0$.
    Let $i \in [0, n]$, we assume that the induction hypothesis holds for all $0, 1 \dots i-1$ and show it also holds for $i$.
    Let $r(n+1-s)$ be the next page after $r(n+1-i)$ on the blame chain if the bad guidance was used.
    It holds:
    \begin{align*}
      \mathbb{E}[X_{i}] &= 1 + \beta \tau \mathbb{E}[X_{s}] + \frac{1-\tau}{i} \cdot \sum\limits_{l=1}^{i-1} \mathbb{E}[X_{l}].
    \end{align*}
    Using the same equality for $s$ and then the induction hypothesis we obtain:
    \begin{align*}
      \mathbb{E}[X_{s}] &\le 1 + \beta \tau \mathbb{E}[X_{s}] + \frac{1-\tau}{s} \cdot \sum\limits_{l=1}^{s-1} \mathbb{E}[X_{l}]
    \end{align*}
    Moreover it holds:
    \begin{align*}
      \frac{1}{i} \cdot \sum\limits_{l=1}^{i-1} \mathbb{E}[X_{l}] \ge \frac{1}{s} \cdot \sum\limits_{l=1}^{s-1} \mathbb{E}[X_{l}]
    \end{align*}
    as the averaged terms on the left sum are either contained in the right sum or greater (induction hypothesis).
    Gathering the three previous inequalities gives the desired claim.
  \end{proof}

\subsection{Proof of \autoref{lem:blame-chain-exp-length}}
\label{sub:appendix-proof-blame-chain}

\lemBlameChainExpLength*
\begin{proof}
  Consider a blame chain $r(0), r(i_1) \dots r(n+1)$.
  Let $p \in [0, n]$ and let $r(s)$ be the next page after $r(p)$ in the blame chain in case the bad guidance was used.
  It holds:
  \begin{align*}
    \mathbb{E}[X_p] &= 1 + \beta \tau \mathbb{E}[X_{s}] + \frac{1 - \tau}{p} \sum\limits_{i=1}^{p-1} \mathbb{E}[X_{i}].
  \end{align*}
  Using \autoref{lem:shorter-path} gives:
  \begin{align*}
    (1 - \beta \tau) \cdot \mathbb{E}[X_p] &\le 1 + \frac{1 - \beta \tau}{p} \sum\limits_{i=1}^{p-1} \mathbb{E}[X_{i}]
  \end{align*}
  We now assume by strong induction that $\mathbb{E}[X_{i}] \le H_{i}/(1 - \beta \tau)$ for all $i \in [1, p-1]$ --- the base case for $i=1$ clearly holds.
  Dividing the previous inequality by $(1 - \beta \tau)$:
  \begin{align*}
    \mathbb{E}[X_p] &\le \frac{1}{1 - \beta \tau} + \frac{1}{p} \sum\limits_{i=1}^{p-1} \frac{H_{i}}{1 - \beta \tau}\\
    \implies \mathbb{E}[X_p] &\le \frac{1}{1 - \beta \tau} + \frac{1}{p \cdot (1 - \beta \tau)} \sum\limits_{i=1}^{p-1} \frac{p-i}{i}\\
    \implies \mathbb{E}[X_p] &\le \frac{1}{1 - \beta \tau} + \frac{1}{1 - \beta \tau} \cdot H_{p-1} - \frac{p-1}{p} \cdot \frac{1}{1 - \beta \tau}\\
    \implies \mathbb{E}[X_p] &\le \frac{H_{p}}{1 - \beta \tau}
  \end{align*}
  It therefore holds that $\mathbb{E}[X_n] \le H_n/(1 - \beta \tau) \le H_k/(1 - \beta \tau)$ and the claim follows.
\end{proof}

\subsection{Proof of \autoref{thm:rm-dtb-competitive}}
\label{sub:appendix-proof-caching}

\thmCaching*
\begin{proof}
  We assume that a good guidance suggests to evict the unmarked, cached page that will be requested latest.
  Let $\sigma$ be the sequence of requested pages.
  We define a \emph{phase} as a subsequence of consecutive requested pages that contains no more than $k$ different pages.
  We greedily divide $\sigma$ into phases: starting from the first request, take the longest possible phase, and so on until the end of $\sigma$.
  Let $P$ be a phase, we define:
  \begin{itemize}
  \item
    A \emph{clean} page is a page that is requested during $P$ but was not cached at the beginning of $P$.
    Define $C$ the number of clean pages.
  \item
    A \emph{returning} page is a page that is requested during $P$ and was cached at the beginning of $P$.
    Define $R$ the number of returning pages.
  \item
    A \emph{vanishing} page is a page that is not requested during $P$ and was cached at the beginning of $P$.
    There are $C$ vanishing pages.
  \end{itemize}

  \noindent
  First, notice that $\markalg$ pays no more than $k$ during $P$, since it pays at most one for each page kind thanks to the marking mechanism.

  \noindent
  Then, we prove that the expected number of page faults is upper bounded by $C / (\tau (1-\beta))$.
  With probability $\tau (1-\beta)$, the algorithm uses a good guidance and $\markalg$ evicts a vanishing page.
  At any given time of $P$, the number of evictions until the next vanishing page is evicted follows a geometric law of parameter $\tau (1-\beta)$.
  As there are $C$ vanishing pages, the expected number of evictions until all vanishing pages are out of the cache is $C / (\tau (1-\beta))$, which is an upper bound on the expected cost of $\markalg$ during phase $P$.

  \noindent
  Finally, there are $C$ blame chains evolving in parallel, each with an expected length of at most $H_k / (1 - \beta \tau)$ by \autoref{lem:blame-chain-exp-length}.
  The expected number of page faults is therefore no greater than $C \cdot H_k / (1 - \beta \tau)$.
  Meanwhile, an optimal offline algorithm pays an amortized cost of at least $C / 2$ per phase (well-known proof from \cite{Fiat1991CompetitivePA}) and the claim holds.
\end{proof}

\section{Proofs for Online Metrical Task System}

\subsection{Proof of \autoref{theorem:mts}}
\label{sub:mts-proof}

\thmMTS*
\begin{proof}
    Consider an arbitrary time interval $[t,t']$ and recall that our goal is to show that $\mtsalg$ is $2\cdot \min\{\frac{1}{\tau(1-\beta)}+1,\frac{1-\tau}{(1-\tau\beta)^{2}}H_{n}+1,n\}$-competitive in this interval. Let $p$ and $p'$ be the phases in which $t$ and $t'$ lie, respectively. Observe that the cost paid by $\mtsalg$ between time $t$ and the end of $p$ as well as the cost between the beginning of $p'$ and $t'$ can be attributed to the additive constant. Thus, to complete our analysis, it is left to analyze the competitive ratio in a single (complete) phase.    
    
    We first observe that any offline algorithm  must pay a cost of at least $1$ on every phase. Indeed, if an algorithm transitions during a phase, then it pays at least $1$ in transition cost. Otherwise, it resided in the same state throughout the phase. By definition, a phase continues until all states become saturated. Thus, any offline algorithm must pay at least $1$ in processing cost.

    It is therefore sufficient to provide an upper bound on the expected cost of $\mtsalg$ in a single phase $p$. Notice that by design, $\mtsalg$ incurs a cost of at most $1$ in processing cost for every state it visits during $p$. This is because $\mtsalg$ stays in a state only until it becomes saturated. In particular, $\mtsalg$ incurs a total processing cost of $2$ on the first and last states it visits during $p$. For each intermediate state, $\mtsalg$ incurs $1$ in transition cost and at most $1$ in processing cost. Therefore, to complete the analysis it is left to show that the expected number of \emph{transitions} during $p$ is bounded by $\min\{\frac{1}{\tau(1-\beta)},\frac{1-\tau}{(1-\tau\beta)^{2}}H_{n},n-1\}$.

    We start by showing that the number of transitions during $p$ is at most $n-1$. To see that, notice that by construction, whenever $\mtsalg$ transitions, it moves to an unsaturated state for $p$. Thus, in any transition, the number of unsaturated states decreases. It follows that the number of transitions during $p$ is at most $n-1$. 

\sloppy
    We now show that the expected number of transitions $\mtsalg$ makes during $p$ is upper bounded by $1/(\tau(1-\beta))$. To that end, we consider a guide that selects a state $s^{*}$ with longest time until saturation. Observe that if $\mtsalg$ transitions to $s^{*}$ during $p$, its next transition is only in the next phase. Recall that the probability of receiving non-corrupted guidance at each time $i$ is $\tau(1-\beta)$. Thus, the expected number of transitions during $p$ is at most $1/(\tau(1-\beta))$.
\par\fussy

    Finally, we need to show that the expected number of transitions in a phase is upper bounded by $\frac{1-\tau}{(1-\tau\beta)^{2}}H_{n}$. We say that a transition round is \emph{adversarial} if a corrupted guidance is given in that round. Let $X_{1}$ be a random variable that counts the number of adversarial rounds until the first non-adversarial round. For each $i>1$, let $X_{i}$ be a random variable that counts the number of adversarial rounds between the $(i-1)$-th and the $i$-th non-adversarial rounds. Let $N$ be a random variable that counts the total number of non-adversarial rounds in the phase.
    
    The goal is now to bound the sum $X_{1}+\dots+X_{N}$ which serves as an upper bound on the number of transitions during the phase. To that end, we note that we can assume w.l.o.g.\ that $N$ is determined strictly by the randomness of the algorithm. This is because non-adversarial rounds in which a non-corrupted guidance is given can only decrease the value of $N$ (and thus, the sum). Based on this assumption, we get that $N$ and the $X_{i}$ variables are determined based on two independent sources of randomness: the former by the algorithm's randomness, whereas the latter depends on the random assignment of bad guidance. In particular, $N$ is independent of the random variables $X_{i}$. Thus, it follows from Wald's identity that $\mathbb{E}[X_{1}+\dots+X_{N}]=\mathbb{E}[X_{1}]\cdot \mathbb{E}[N]$ (here, notice that $\mathbb{E}[X_{1}]=\mathbb{E}[X_{i}]$ for all $i$ since the $X_{i}$s are identically distributed). Since $X_{1}$ is a geometric variable with success probability $1-\tau\beta$, we get $\mathbb{E}[X_{1}]\cdot \mathbb{E}[N]=(1/(1-\tau\beta))\mathbb{E}[N]$.  
    
    It is left to show that $\mathbb{E}[N]\leq \frac{(1-\tau)}{1-\beta\tau}H_{n}$. Denote by $\nu_{k}$ the number of non-adversarial rounds in the phase given that there are $k$ unsaturated states and notice that $N=\nu_{n}$. Clearly, $\nu_{1}=1$. For $k>1$, first note that given that the round is non-adversarial, the probability of moving from $k$ unsaturated states to a state that leaves $k-1$ unsaturated states (i.e., the state with nearest saturation time) is at most $(1/k)(1-\tau)$. Now, conditioning on the round being non-adversarial we get a probability bound of $\frac{(1-\tau)}{1-\beta\tau}(1/k)$. Based on that, we get the recursive formula $\nu_{k}\leq \nu_{k-1}+\frac{(1-\tau)}{1-\beta\tau}(1/k)$. Developing the recurrence for $n$, we get $\nu_{n}\leq \frac{(1-\tau)}{1-\beta\tau}H_{n}$ which completes the analysis.
\end{proof}

\end{document}